\DeclareMathOperator*{\argmax}{arg\,max}
\DeclareMathOperator*{\argmin}{arg\,min}
\crefname{assumption}{Assumption}{Assumptions}
\theoremstyle{plain}
\newtheorem{theorem}{Theorem}[section]
\newtheorem{proposition}[theorem]{Proposition}
\newtheorem{lemma}[theorem]{Lemma}
\theoremstyle{definition}
\newtheorem{assumption}[theorem]{Assumption}
\theoremstyle{remark}
\title{Is Score Matching Suitable for Estimating Point Processes?}
\author{Haoqun Cao$^1$, Zizhuo Meng$^2$, Tianjun Ke$^1$, Feng Zhou$^{1,3}$\thanks{Corresponding author.}\\
$^1$Center for Applied Statistics and School of Statistics, Renmin University of China\\
$^2$Data Science Institute, University of Technology Sydney\\
$^3$Beijing Advanced Innovation Center for Future Blockchain and Privacy Computing\\
\texttt{hcao65@wisc.edu, feng.zhou@ruc.edu.cn}
}
\begin{document}

\maketitle

\begin{abstract}

Score matching estimators have gained widespread attention in recent years partly because they are free from calculating the integral of normalizing constant, thereby addressing the computational challenges in maximum likelihood estimation (MLE).  Some existing works have proposed score matching estimators for point processes. However, this work demonstrates that the incompleteness of the estimators proposed in those works renders them applicable only to specific problems, and they fail for more general point processes. To address this issue, this work introduces the weighted score matching estimator to point processes. Theoretically, we prove the consistency of our estimator and establish its rate of convergence. Experimental results indicate that our estimator accurately estimates model parameters on synthetic data and yields results consistent with MLE on real data. In contrast, existing score matching estimators fail to perform effectively. Codes are publicly available at \url{https://github.com/KenCao2007/WSM_TPP}. 
\end{abstract}

\section{Introduction}
\label{intro}
Point processes are a class of statistical models used to characterize event occurrences. Typical models include Poisson processes~\citep{kingman1992poisson} and Hawkes processes~\citep{hawkes1971spectra}. Their applications span various fields such as seismology~\citep{ogata1998space,ogata1999seismicity}, finance~\citep{bacry2014hawkes,hawkes2018hawkes}, criminology~\citep{mohler2013modeling}, and neuroscience~\citep{linderman2016bayesian,zhou2022efficient}. 
In the field of point processes, maximum likelihood estimation (MLE) has been a conventional estimator. 
However, MLE has an inherent limitation: it requires the computation of the normalizing constant, which corresponds to the intensity integral term in the likelihood. 
Except for simple cases, calculating the intensity integral analytically is generally infeasible. 
This necessitates the use of numerical integration methods like Monte Carlo or quadrature for approximating the computation. 
This introduces approximation errors, and more importantly, for high-dimensional problems, numerical integration encounters the curse of dimensionality, rendering training infeasible. 

To address this issue, prior research has introduced the concept of score matching (SM)~\citep{Hyvarinen05} to the field of point processes. For instance, 
\cite{SahaniBM16} derived the application of SM to the estimation of traditional statistical Poisson processes. Furthermore, \cite{zhang2023integration} extended the use of SM to the estimation of deep covariate spatio-temporal point processes. \cite{li2023smurf} also generalized the application of SM to Hawkes processes. 
These works have greatly advanced the utilization of SM for point processes. 
However, in practical applications, we have found that these estimators only work for specific point processes. For more general cases, these estimators cannot accurately estimate model parameters, even for some simple statistical point processes. One of the core contribution of this work is to \emph{theoretically} demonstrate the incompleteness of the estimators proposed in the aforementioned studies. 

The incompleteness of the estimators in the aforementioned studies stems from the transition from explicit SM to implicit SM. 
The explicit SM estimates model parameters by minimizing the expected distance between the gradient of the log-density of the model and the gradient of the log-density of the data. 
However, we cannot directly minimize the above objective function since it depends on the unknown data distribution. To facilitate solving, we need to convert the above explicit SM to implicit SM by using a trick of integration by parts, provided that some regularity conditions are satisfied~\citep{Hyvarinen05}. 
In \cite{SahaniBM16, zhang2023integration, li2023smurf}, they assume that the required regularity conditions are satisfied in their point process models and directly employ the implicit SM objective. However, as demonstrated in \cref{method}, the required regularity conditions cannot be met for general point processes. This implies that the concise implicit SM objectives (Equation (2) in \cite{SahaniBM16}, Equation (10) in \cite{zhang2023integration}, Equation (4) in \cite{li2023smurf}) are incomplete, and they cannot accurately estimate parameters for general point processes. 

To address this issue, 
this work introduces a (autoregressive) weighted score matching (WSM) estimator that can be applied to more general point processes. WSM eliminates the intractable terms in SM objective by adding a weight function that takes zero at the boundary of the integration region. 
Compared to previous work on WSM \cite{hyvarinen2007some, yu2019generalized, liu2022estimating}, we are the first work to apply WSM on a stochastic process where the dimension $N_T$ is also random. This stochasticity in dimensionality poses greater challenges to the derivation, requiring special treatment to address this issue. 

Specifically, we make following contributions: 
\textbf{(1)} We theoretically demonstrate that implicit (autoregressive) SM estimators in \cite{SahaniBM16, zhang2023integration, li2023smurf} are incomplete because the required regularity conditions cannot be satisfied for general point processes. 
\textbf{(2)} To address this issue, we propose a (autoregressive) WSM estimator that is applicable to general point processes. Theoretically, we establish its consistency and convergence rate. 
\textbf{(3)} In experiments, we confirm that on synthetic data, (autoregressive) WSM successfully recovers the ground-truth parameters; on real data, (autoregressive) WSM estimates results consistent with MLE; while existing (autoregressive) SM estimator fails in both scenarios. 


\section{Preliminaries}
\label{preliminary}
Now we provide knowledge on Poisson and Hawkes processes and (autoregressive) score matching.

\subsection{Poisson Process and Hawkes Process}\label{section:define poisson and hawkes process}

The Poisson process~\cite{kingman1992poisson} is a stochastic point process that models the occurrence of events over a time window $[0,T]$. A trajectory from Poisson process can be represented as an ordered sequence $\mathcal T = (t_1,\ldots, t_{N_T})$ where $N_t = \max\{n: t_n\leq t, t\in[0,T]\}$ is the corresponding counting process and thus $N_T$ is the random number of events in $[0, T]$.
The inhomogeneous Poisson process has a time-varying intensity $\lambda(t)$ representing the instantaneous rate of event occurrence at $t$. 
Mathematically, the intensity function is defined as $\lambda(t)=\lim_{\delta_t\to 0}\mathbb{E}[N_{t+\delta_t}-N_t]/\delta_t$. 
The probability density function of Poisson process is: 
\begin{equation}
    p(\mathcal T)=\prod_{n=1}^{N_T}\lambda(t_n)\exp\left(-\int_0^T \lambda(t)dt\right). 
\end{equation}

The Hawkes process~\cite{hawkes1971spectra} is a self-excitation point process where the occurrence of an event increases the likelihood of more events in the future. 
A trajectory from Hawkes process is similarly represented as $\mathcal T = (t_1,\ldots, t_{N_T})$ on $[0,T]$. The conditional intensity function of Hawkes process, representing the instantaneous rate of event occurrence at $t$ given the history up to but not including $t$, is: 
\begin{equation}
    \lambda^*(t)=\lambda(t|\mathcal{F}_{t^-})= \mu(t) + \sum_{t_j<t}g(t-t_j), 
    \label{eq:Hawkes intensity}
\end{equation}
where $\mu(t)$ is the baseline intensity, $g(\cdot)$ is the triggering kernel representing the self-excitation effect, the summation expresses the accumulative excitation from all past events, $\mathcal{F}_{t^-}$ is the historical information up to but not including $t$. 
$\lambda^*(t)$ means the intensity is dependent on the history. 

Poisson process assumes the independence of event occurrences, while Hawkes process extends it by introducing an autoregressive structure, making subsequent events dependent on prior events. 
Given the history $\mathcal{F}_{t_n}=(t_1,\ldots,t_n)$, the conditional probability density of $(n+1)$-th event at $t>t_n$ is: 
\begin{equation}
 p(t|\mathcal{F}_{t_n}) = \lambda^*(t)\exp\left(-\int_{t_n}^t \lambda^*(\tau)d\tau\right). 
    \label{eq:conditional pdf}
\end{equation}
Here, we introduce the definition of the univariate Hawkes process. However, multivariate Hawkes processes also exist. For ease of notation, we use the univariate case for illustration, but we also provide solutions for the multivariate case.

\subsection{Score Matching}
MLE is a classic estimator that minimizes Kullback–Leibler divergence between a model distribution and data distribution. However, a drawback is the intractable computation of the normalizing constant. Approximating it through numerical integration can be computationally demanding. 
In contrast, SM~\citep{Hyvarinen05} offers an alternative by minimizing Fisher divergence between model and data distributions: 
\begin{equation*}
    \mathcal{L}_{\text{SM}}(\theta)=\frac{1}{2}\mathbb{E}_{p(\mathbf{x})}\Vert\nabla_{\mathbf{x}}\log p(\mathbf{x})-\nabla_{\mathbf{x}}\log p_{\theta}(\mathbf{x})\Vert^2, 
    \label{esm}
\end{equation*}
where $p(\mathbf{x})$ represents the data distribution, $p_{\theta}(\mathbf{x})$ is the parameterized model distribution, the gradient of the log-density is called the score, and $\Vert\cdot\Vert$ represents a suitable norm, such as the $\ell^2$ norm. Minimizing the Fisher divergence above provides the parameter estimate. 
The advantage of SM lies in its ability to bypass the computation of the normalizing constant since the score no longer contains this constant: $p_{\theta}(\mathbf{x})=\frac{1}{Z(\theta)}\tilde{p}_{\theta}(\mathbf{x})$ where $Z(\theta)=\int \tilde{p}_{\theta}(\mathbf{x})d\mathbf{x}$, $\nabla_{\mathbf{x}}\log p_{\theta}(\mathbf{x})=\nabla_{\mathbf{x}}\log\tilde{p}_{\theta}(\mathbf{x})$. 

Under certain conditions, we can use integration by parts to replace the explicit SM objective, which involves an unknown distribution $p(\mathbf{x})$, with an equivalent implicit one, 
\begin{equation}
\label{eq:ISM objective}
\mathcal{J}_{\text{SM}}(\theta)=\mathbb{E}_{p(\mathbf{x})}\left[\frac{1}{2}\Vert\nabla_{\mathbf{x}}\log p_{\theta}(\mathbf{x})\Vert^2+\text{Tr}\left(\nabla^2_{\mathbf{x}}\log p_{\theta}(\mathbf{x})\right)\right]. 
\end{equation}

\subsection{Autoregressive Score Matching}
An autoregressive model defines a probability density $p(\mathbf{x})$ as a product of conditionals using the chain rule: $p(\mathbf{x})=\prod_{n=1}^N p(x_n|\mathbf{x}_{<n})$, 
where $x_n$ is the $n$-th entry and $\mathbf{x}_{<n}$ denotes the entries with indices smaller than $n$. 
The original SM is not suitable for autoregressive models because the autoregressive structure introduces challenges in gradient computation in \cref{eq:ISM objective}. 
To address this issue, \cite{meng2020autoregressive} proposed autoregressive score matching (ASM). Unlike SM, which minimizes the Fisher divergence between the joint distributions of the model $p_{\theta}(\mathbf{x})$ and the data $p(\mathbf{x})$, ASM minimizes the Fisher divergence between the conditionals of the model $p_{\theta}(x_n|\mathbf{x}_{<n})$ and the data $p(x_n|\mathbf{x}_{<n})$: 
\begin{equation*}
    \mathcal{L}_{\text{ASM}}(\theta)=\frac{1}{2}\sum_{n=1}^N\mathbb{E}_{p(\mathbf{x}_{\leq n})}\left(\frac{\partial\log p(x_n|\mathbf{x}_{<n})}{\partial x_n}-\frac{\partial\log p_{\theta}(x_n|\mathbf{x}_{<n})}{\partial x_n}\right)^2. 
\end{equation*}
Similarly, the above explicit ASM objective involves an unknown distribution $p(x_n|\mathbf{x}_{<n})$. Under specific regularity conditions, we can apply integration by parts to derive an implicit ASM objective: 
\begin{equation}
    \mathcal{J}_{\text{ASM}}(\theta)=\sum_{n=1}^N \mathbb{E}_{p(\mathbf{x}_{\leq n})}\left[\frac{1}{2}\left(\frac{\partial\log p_{\theta}(x_n|\mathbf{x}_{<n})}{\partial x_n}\right)^2+\frac{\partial^2 \log p_{\theta}(x_n|\mathbf{x}_{<n})}{\partial x_n^2}\right]. 
\label{asm}
\end{equation}

\section{Score Matching for Poisson Process}
\label{method}
We analyze the application of SM for Poisson process and its failure in achieving consistent estimation. Subsequently, we propose a provably consistent WSM estimator.

\subsection{Failure of Score Matching for Poisson Process}
Consider a Poisson process $\mathcal{T} = (t_1, \ldots, t_{N_T})$ on $[0, T]$. 
Let $p(\mathcal{T})$ represent the data distribution, which is uniquely associated with an intensity function $\lambda(t)$. 
Let $p_{\theta}(\mathcal{T})$ represent the parameterized model distribution, which is uniquely associated with a parameterized intensity function $\lambda_{\theta}(t)$. 
In the following, we denote the score as $\psi(t_n)=\frac{\partial}{\partial t_n}\log p(\mathcal T)$. 

Previous works \cite{SahaniBM16,zhang2023integration} have both attempted to apply SM to the Poisson process: 
\begin{equation}
    \mathcal{L}_{\text{SM}}(\theta)=\frac{1}{2}\mathbb E_{p(\mathcal T)}\left[\sum_{n=1}^{N_T}\left(\psi(t_n)-\psi_{\theta}(t_n)\right)^2\right]. 
    \label{poisson_l_sm}
\end{equation}
In order for SM to be practical, \cite{SahaniBM16,zhang2023integration} assumed that specific regularity conditions are satisfied. Therefore, they employed an implicit SM objective similar to \cref{eq:ISM objective}: 
\begin{equation}
    \mathcal{J}_{\text{SM}}(\theta)=\mathbb{E}_{p(\mathcal{T})}\left[\sum_{n=1}^{N_T}\frac{1}{2}\psi^2_{\theta}(t_n)+\frac{\partial \psi(t_n)}{\partial t_n}\right]. 
\label{poisson_j_sm}
\end{equation}

In practical applications, we have found that the above estimator works only for specific Poisson processes and fails for more general Poisson processes. 
The reason for its failure lies in the fact that, for more general Poisson processes, the specific regularity conditions cannot be satisfied. 

Such conditions require the probability density function of the random variable is zero when it approaches infinity in any of its dimensions. However, for point processes, such requirement is not satisfied, because the random variable in point process $\mathcal T = (t_1,\ldots, t_{N_T})$ is not of fixed dimension and takes values in a subset of $\mathbb{R}_+^{N_T}$. 
Therefore, for general Poisson processes, we cannot derive the implicit SM in \cref{poisson_j_sm} based on the explicit SM in \cref{poisson_l_sm}. 

\begin{proposition}\label{prop:ESM and ISM for Poisson Process}
    Assume that all functions and expectations in $\mathcal L_{\text{SM}}(\theta)$ and $\mathcal J_{\text{SM}}(\theta)$ are well defined, we have, 
    \begin{equation}\label{eq:nuisance term in sm}
    \begin{aligned}
        \mathcal L_{\text{SM}}(\theta)=&\mathcal J_{\text{SM}}(\theta)+\text{const}-\sum_{N=1}^\infty \int p(t_1,\ldots, t_N)\frac{\partial \log p_{\theta}(t_1,\ldots,t_N)}{\partial t_1}\Big|_{t_1=0}d\mathcal T_{2:N}\\
        &+\sum_{N=1}^\infty \int p(t_1,\ldots, t_N)\frac{\partial \log p_{\theta}(t_1,\ldots,t_N)}{\partial t_N}\Big|_{t_N=T}d\mathcal T_{1:N-1}.
    \end{aligned}
    \end{equation}
    Therefore, $\mathcal L_{\text{SM}}(\theta)$ is equivalent to $\mathcal J_{\text{SM}}(\theta)$ if and only if the sum of the last two terms is a constant not containing $\theta$. 
\end{proposition}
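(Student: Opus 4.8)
The plan is to start from $\mathcal L_{\text{SM}}(\theta)$, expand the quadratic $\tfrac12(\psi(t_n)-\psi_\theta(t_n))^2=\tfrac12\psi^2(t_n)-\psi(t_n)\psi_\theta(t_n)+\tfrac12\psi_\theta^2(t_n)$, and immediately dispose of the terms that carry no $\theta$-dependence. The pure-data piece $\tfrac12\mathbb E_{p}[\sum_n\psi^2(t_n)]$ is absorbed into $\text{const}$, while $\tfrac12\mathbb E_{p}[\sum_n\psi_\theta^2(t_n)]$ is already the first term of $\mathcal J_{\text{SM}}(\theta)$. Everything therefore reduces to rewriting the cross term $-\mathbb E_{p}[\sum_n\psi(t_n)\psi_\theta(t_n)]$, which is the only place the unknown data score $\psi$ appears.

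The decisive step is the score identity $p(\mathcal T)\psi(t_n)=p(\mathcal T)\,\partial_{t_n}\log p(\mathcal T)=\partial_{t_n}p(\mathcal T)$, which eliminates the data density's normalization and turns the cross term into $-\sum_{N}\int_{S_N}\sum_n\partial_{t_n}p\cdot\psi_\theta(t_n)\,d\mathcal T$, where I write the process expectation as a sum over the event count $N$ of integrals over the ordered simplex $S_N=\{0<t_1<\cdots<t_N<T\}$. I would then integrate by parts in each coordinate $t_n$ (equivalently, apply the divergence theorem to the field $p\,\bm\psi_\theta$ with $\bm\psi_\theta=(\psi_\theta(t_1),\dots,\psi_\theta(t_N))$ on $S_N$). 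The interior volume contribution $\sum_N\int_{S_N}p\sum_n\partial_{t_n}\psi_\theta(t_n)\,d\mathcal T=\mathbb E_{p}[\sum_n\partial\psi_\theta(t_n)/\partial t_n]$ reassembles exactly the second term of $\mathcal J_{\text{SM}}(\theta)$, so only boundary contributions remain to be accounted for.

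The crux is the boundary of $S_N$, whose faces are the two global faces $\{t_1=0\}$ and $\{t_N=T\}$ together with the interior diagonal faces $\{t_n=t_{n+1}\}$. On each diagonal face the outgoing flux is proportional to $\psi_\theta(t_n)-\psi_\theta(t_{n+1})$; since the Poisson density is symmetric in its arguments (indeed $\psi_\theta(t_n)=(\log\lambda_\theta)'(t_n)$ depends only on $t_n$), one has $\psi_\theta(t_n)=\psi_\theta(t_{n+1})$ whenever $t_n=t_{n+1}$, so these interior fluxes cancel and contribute nothing. Only the two global faces survive, producing precisely the two boundary sums evaluated at $t_1=0$ and $t_N=T$ in \cref{eq:nuisance term in sm}. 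This is exactly where the analysis departs from classical score matching on $\mathbb R^d$: there the density decays at $\pm\infty$ and all boundary terms vanish, whereas the Poisson density does not vanish at the hard endpoints $0$ and $T$, so these terms persist. Collecting everything yields $\mathcal L_{\text{SM}}(\theta)=\mathcal J_{\text{SM}}(\theta)+\text{const}+(\text{boundary sums})$, and the stated equivalence criterion is then immediate, since the two objectives share a minimizer precisely when the boundary sums together form a quantity free of $\theta$.

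The step I expect to be the main obstacle is the rigorous handling of the random dimension $N$: justifying the interchange of the infinite sum over $N$ with the integration-by-parts step, and verifying that the interior diagonal faces genuinely cancel across the whole simplex (the telescoping between consecutive indices $n$ and $n+1$, which relies on the symmetry of the density rather than on decay). This is exactly the subtlety that the fixed-dimension analyses of prior work gloss over when they assume the classical regularity conditions hold.
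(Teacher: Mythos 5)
Your proposal is correct and follows essentially the same route as the paper's proof: expand the square, absorb the $\theta$-free term, apply the score identity $p\,\psi = \partial_{t_n} p$ to the cross term, integrate by parts over the ordered simplex, and observe that the diagonal-face contributions cancel because $\psi_\theta(t_n) = (\log\lambda_\theta)'(t_n)$ depends only on its own coordinate, leaving only the faces $\{t_1=0\}$ and $\{t_N=T\}$. Your divergence-theorem phrasing of the diagonal cancellation is just a repackaging of the paper's pairwise telescoping of adjacent boundary terms (its Equation for the continuity of the score), so the two arguments coincide.
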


For specific Poisson processes, the sum of the last two terms can be zero. However, for more general cases, this sum contains $\theta$. This implies that $\mathcal J_{\text{SM}}$ fails for general Poisson processes. 

\subsection{Weighted Score Matching}

To address the situation where SM fails, inspired by \cite{hyvarinen2007some,yu2019generalized}, we introduce the WSM for Poisson process. The core idea of WSM is to eliminate the two intractable terms by adding a weight function that takes zero at the boundary of the integration region. 
The weight function is designed to be a vector-valued function $\mathbf{h}: \mathbb R^{N_T}_+  \rightarrow \mathbb{R}_+^{N_T}$ with the $n$-th element denoted as $h_n(\mathcal T)$. 
Here, we present the conditions that a valid weight function should satisfy: 
\begin{equation}\label{eq:wsm_condition}
\begin{gathered}
    \lim_{t_n\rightarrow t_{n+1}}p(\mathcal T)\psi_{\theta}(t_n)h_n(\mathcal T)=0,\ \lim_{t_n\rightarrow t_{n-1}}p(\mathcal T)\psi_{\theta}(t_n)h_n(\mathcal T)=0,\ \forall n\in [N_T],\\
    \mathbb E[\psi^2_{\theta}(t_n)h_n(\mathcal T)]<\infty,\ \mathbb E[\psi_{\theta}(t_n)\frac{\partial h_n(\mathcal T)}{\partial t_n}]<\infty, \forall n\in [N_T]. 
\end{gathered}
\end{equation}
One can verify that such weight functions are easy to find for most $p(\mathcal T)$ and $\psi_\theta(t_n)$. With a valid weight function $\mathbf{h}$, the explicit WSM objective can be defined as: 
\begin{equation}
\begin{aligned}
        &\mathcal L_{\text{WSM}}(\theta) = \frac{1}{2}\mathbb E_{p(\mathcal T)}\left[\sum_{n=1}^{N_T}(\psi(t_n)-\psi_{\theta}(t_n))^2 h_n(\mathcal T)\right]. 
\end{aligned}
\end{equation}
The introduction of the weight function allows control over the values of the integrand at the boundaries of the integration domain, thereby eliminating the last two terms in \cref{eq:nuisance term in sm}. 
\begin{theorem}\label{thm:validity of WSM objective}
    Assume the true intensity is in the family of the model intensity, denoted as $\lambda(t) = \lambda_{\theta^*}(t)$, where $\theta^* \in \Theta$.  We further assume that $\frac{\partial \log\lambda_{\theta_1}(t)}{\partial t}=\frac{\partial \log\lambda_{\theta_2}(t)}{\partial t}\ a.s.$ gives $\theta_1 = \theta_2$. Then the unique minimizer of $\mathcal L_{\text{WSM}}(\theta)$ is $\theta^*$.
\end{theorem}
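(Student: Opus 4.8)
The plan is to exploit the special structure of the Poisson score together with the non-negativity of the weight function, reducing the claim to the stated identifiability condition. The first step is to record that, because the intensity integral $\int_0^T \lambda(t)\,dt$ appearing in $\log p(\mathcal{T})$ does not depend on the event locations, the score collapses to a purely local quantity: $\psi(t_n) = \frac{\partial}{\partial t_n}\log\lambda(t_n) = \frac{\lambda'(t_n)}{\lambda(t_n)}$, and likewise $\psi_{\theta}(t_n) = \frac{\partial}{\partial t_n}\log\lambda_{\theta}(t_n)$. This is precisely why SM sidesteps the normalizing constant, and it links the score difference appearing in $\mathcal{L}_{\text{WSM}}$ directly to $\frac{\partial}{\partial t}\log\lambda_{\theta^*} - \frac{\partial}{\partial t}\log\lambda_{\theta}$, the object constrained by the identifiability assumption.

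With this in hand, I would first show that $\theta^*$ is a global minimizer. Since $h_n(\mathcal{T}) \ge 0$ and each summand $(\psi(t_n) - \psi_{\theta}(t_n))^2 h_n(\mathcal{T})$ is non-negative, $\mathcal{L}_{\text{WSM}}(\theta) \ge 0$ for every $\theta$. Under the assumption $\lambda = \lambda_{\theta^*}$ we have $\psi_{\theta^*}(t_n) = \psi(t_n)$ at every event, so every summand vanishes and $\mathcal{L}_{\text{WSM}}(\theta^*) = 0$. Hence $\theta^*$ attains the global minimum.

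For uniqueness, suppose $\theta$ is any minimizer, so that $\mathcal{L}_{\text{WSM}}(\theta) = 0$. Because the integrand is a non-negative sum, each term must vanish $p(\mathcal{T})$-almost surely; using that a valid weight function is strictly positive on the interior of the integration region (it vanishes only on the boundary), this forces $\psi(t_n) = \psi_{\theta}(t_n)$ for $p(\mathcal{T})$-almost every trajectory and every event therein. The main obstacle is then translating this trajectory-level statement into the pointwise a.e.\ equality demanded by the hypothesis. I would do this via Campbell's (Mecke's) formula: the expected count of events at which the two scores disagree equals $\int_0^T \mathbf{1}[\psi(t)\neq\psi_{\theta}(t)]\,\lambda(t)\,dt$, which the previous step shows to be zero. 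Therefore $\frac{\partial}{\partial t}\log\lambda_{\theta^*}(t) = \frac{\partial}{\partial t}\log\lambda_{\theta}(t)$ for Lebesgue-a.e.\ $t$ on the support of $\lambda$, and the identifiability assumption (applied with $\theta_1 = \theta^*$, $\theta_2 = \theta$) yields $\theta = \theta^*$. The delicate point throughout is handling the random dimension $N_T$: the ``a.s.''\ quantifier ranges jointly over the number of events and their locations, and it is precisely Campbell's formula that lets this random-dimensional almost-sure equality be read off as a single deterministic statement about the intensity.
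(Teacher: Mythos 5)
Your proof is correct, and its skeleton coincides with the paper's: both show $\theta^*$ attains the minimum value zero by non-negativity of the weighted squared score difference, and both derive uniqueness by arguing that a vanishing loss together with strict positivity of the weight function on the interior of the integration region forces the model score to agree with the data score, after which the identifiability assumption finishes the argument. The one place you genuinely diverge is the translation from the trajectory-level almost-sure equality to the pointwise a.e.\ statement about $\frac{\partial}{\partial t}\log\lambda_{\theta}$. The paper does this by expanding the expectation into the sum $\sum_{N=1}^{\infty}$ of integrals over the simplices $\{0 \le t_1 \le \cdots \le t_N \le T\}$ and reading the a.e.\ equality directly off the vanishing of those integrals (implicitly using that the finite-dimensional Poisson densities are positive there); you instead invoke Campbell's (Mecke's) formula, equating the expected number of events at which the two scores disagree with $\int_0^T \mathbf{1}[\psi(t)\neq\psi_{\theta}(t)]\,\lambda(t)\,dt$ and concluding this integral is zero. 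Your device is arguably cleaner: it packages the random dimension $N_T$ into a single deterministic integral identity and makes rigorous a step the paper disposes of in one sentence, whereas the paper's route is more elementary in that it needs nothing beyond expanding the expectation. The only cost of your route is that the conclusion is naturally phrased a.e.\ with respect to $\lambda(t)\,dt$, i.e.\ on the support of $\lambda$, rather than a.e.\ on all of $[0,T]$; this gap is harmless here, since the scores are only defined where $\lambda>0$ so the hypothesis of the identifiability assumption is still met, but it deserves the explicit remark you gave it.
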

The explicit WSM objective is not practical as it depends on the unknown data distribution $p(\mathcal{T})$, so we further derive the implicit WSM objective which is tractable. 
\begin{theorem}\label{thm:equivalence between EWSM and IWSM}
    Assume that all functions and expectations in $\mathcal L_{\text{WSM}}(\theta)$ and  $\mathcal J_{\text{WSM}}(\theta)$ are well defined, \cref{eq:wsm_condition} is satisfied, we have, 
    \begin{equation*}
        \mathcal L_{\text{WSM}}(\theta)=\mathcal J_{\text{WSM}}(\theta) + \text{const},
    \end{equation*}
    \begin{equation}
    \begin{aligned}
                &\mathcal J_{\text{WSM}}(\theta)=\mathbb E_{p(\mathcal T)}\left[\sum_{n=1}^{N_T} \frac{1}{2}\psi^2_{\theta}(t_n)h_n(\mathcal T) + \frac{\partial \psi_{\theta}(t_n)}{\partial t_n}h_n(\mathcal{T})+  \psi_{\theta}(t_n)\frac{\partial h_n(\mathcal T)}{\partial t_n} \right]. 
    \end{aligned}
    \label{poisson_j_wsm}
    \end{equation}
\end{theorem}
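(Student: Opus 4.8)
The plan is to expand the square in $\mathcal{L}_{\text{WSM}}$ and reduce the claim to a single integration-by-parts step whose boundary contributions are annihilated by the weight conditions in \cref{eq:wsm_condition}. Expanding,
\begin{equation*}
\mathcal{L}_{\text{WSM}}(\theta) = \frac12 \mathbb{E}_{p(\mathcal{T})}\Big[\sum_{n=1}^{N_T}\psi^2(t_n)h_n(\mathcal{T})\Big] - \mathbb{E}_{p(\mathcal{T})}\Big[\sum_{n=1}^{N_T}\psi(t_n)\psi_\theta(t_n)h_n(\mathcal{T})\Big] + \frac12\mathbb{E}_{p(\mathcal{T})}\Big[\sum_{n=1}^{N_T}\psi_\theta^2(t_n)h_n(\mathcal{T})\Big].
\end{equation*}
The first term carries no $\theta$ and is absorbed into the additive constant; the third term is exactly the first term of $\mathcal{J}_{\text{WSM}}$ in \cref{poisson_j_wsm}. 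Hence it suffices to show that the cross term equals the remaining two terms of $\mathcal{J}_{\text{WSM}}$, i.e. $-\mathbb{E}_{p(\mathcal{T})}[\sum_n \psi(t_n)\psi_\theta(t_n)h_n] = \mathbb{E}_{p(\mathcal{T})}[\sum_n (\partial_{t_n}\psi_\theta(t_n))h_n + \psi_\theta(t_n)\partial_{t_n}h_n]$.

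To establish this identity I would first eliminate the unknown data score. Since $\psi(t_n)=\partial_{t_n}\log p(\mathcal{T})$, we have $p(\mathcal{T})\psi(t_n)=\partial_{t_n}p(\mathcal{T})$, so the cross term no longer references $p$ outside the integral. Writing the expectation as a sum over the number of events, $\mathbb{E}_{p(\mathcal{T})}[\cdot]=\sum_{N=1}^\infty\int\cdot\,d\mathcal{T}$ with each integral taken over the ordered simplex $\{0<t_1<\cdots<t_N<T\}$, the cross term for a fixed $N$ and a fixed index $n$ becomes $-\int \partial_{t_n}p(\mathcal{T})\,\psi_\theta(t_n)h_n(\mathcal{T})\,d\mathcal{T}$. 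Using Fubini to integrate over $t_n$ first—whose range, with the conventions $t_0=0$ and $t_{N+1}=T$, is the interval $(t_{n-1},t_{n+1})$ pinned by its neighbours—I would integrate by parts in the single variable $t_n$.

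This produces a boundary term $[p(\mathcal{T})\psi_\theta(t_n)h_n(\mathcal{T})]_{t_n=t_{n-1}}^{t_n=t_{n+1}}$ and an interior term $-\int p(\mathcal{T})\,\partial_{t_n}(\psi_\theta(t_n)h_n(\mathcal{T}))\,d\mathcal{T}$. The boundary term vanishes precisely because the two limit conditions $\lim_{t_n\to t_{n+1}}p\psi_\theta h_n=0$ and $\lim_{t_n\to t_{n-1}}p\psi_\theta h_n=0$ in \cref{eq:wsm_condition} are imposed; note that for $n=1$ the lower endpoint is $t_1\to 0$ and for $n=N_T$ the upper endpoint is $t_{N_T}\to T$, so these same conditions also handle the two genuine endpoints of $[0,T]$ that could not be made to vanish in the unweighted setting of \cref{prop:ESM and ISM for Poisson Process}. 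Expanding $\partial_{t_n}(\psi_\theta h_n)=(\partial_{t_n}\psi_\theta)h_n+\psi_\theta\,\partial_{t_n}h_n$ in the surviving interior term and reassembling the sums over $n$ and $N$ yields exactly the two remaining terms of $\mathcal{J}_{\text{WSM}}$, completing the identity and hence the theorem.

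I expect the main obstacle to be careful bookkeeping rather than a deep difficulty: the integration domain is the ordered simplex of a \emph{random} dimension $N$, so the per-coordinate limits of integration are the adjacent event times rather than fixed constants, and the differentiation, the integration by parts, and the interchange of the infinite sum $\sum_{N}$ with the integrals must all be justified simultaneously. This is exactly where the finiteness conditions $\mathbb{E}[\psi_\theta^2(t_n)h_n]<\infty$ and $\mathbb{E}[\psi_\theta(t_n)\,\partial_{t_n}h_n]<\infty$ in \cref{eq:wsm_condition} are needed, since they supply the absolute convergence that legitimizes Fubini and the term-by-term integration by parts. Once the boundary contributions are seen to vanish coordinate-by-coordinate, the remainder is the routine reassembly described above.
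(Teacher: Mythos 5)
Your proposal is correct and takes essentially the same route as the paper's own proof: expand the square, absorb the $\theta$-free term into the constant, rewrite the cross term via $p(\mathcal{T})\psi(t_n)=\partial_{t_n}p(\mathcal{T})$, integrate by parts coordinate-wise over the ordered simplex with the conventions $t_0=0$ and $t_{N+1}=T$, and invoke the limit conditions in \cref{eq:wsm_condition} to annihilate every boundary term while the finiteness conditions justify the surviving expectation. The paper's argument (which it phrases as a weighted repetition of the proof of \cref{prop:ESM and ISM for Poisson Process}) does exactly this, so there is nothing to add.
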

For general Poisson processes, \cref{poisson_j_wsm} is always valid with a suitable weight function. Thus, we do not need to worry about the issues of failure that may arise when using \cref{poisson_j_sm}. 

\section{Autoregressive Score Matching for Hawkes Processes}
\label{method2}
Similarly, we analyze the usage of ASM for Hawkes processes and its failure in achieving consistent estimation. 
Subsequently, we propose a provably consistent autoregressive WSM (AWSM) estimator. 

\subsection{Failure of Autoregressive Score Matching for Hawkes Process}
The original SM, even when adjusted by a weight function, is not suitable for point processes with autoregressive structures, such as Hawkes process. Because in such cases, directly calculating the score still includes the intensity integral, which is precisely what the use of SM aims to avoid. 
Therefore, an ASM method is proposed for parameter estimation for Hawkes process in \cite{li2023smurf}. 

Consider a Hawkes process $\mathcal{T} = (t_1, \ldots, t_{N_T})$ on $[0, T]$ with the underlying conditional probability density of $t_n$ denoted as $p(t_n|\mathcal F_{t_{n-1}})$. 
The parameterized conditional probability density model of $t_n$ is $p_{\theta}(t_n|\mathcal F_{t_{n-1}})$. 
We denote the conditional score as $\psi(t_n|\mathcal F_{t_{n-1}})=\frac{\partial}{\partial t_n}\log p(t_n|\mathcal F_{t_{n-1}})=\frac{\partial}{\partial t_n}\log \lambda(t_n|\mathcal F_{t_{n-1}}) - \lambda(t_n|\mathcal F_{t_{n-1}}), n = 1,\ldots N_T$. 
An explicit ASM objective is defined as: 
\begin{equation}\label{eq:smurf loss}
\begin{aligned}
        &\mathcal L_{\text{ASM}}(\theta) =\frac{1}{2}\mathbb E_{p(\mathcal{T})}\left[\sum_{n=1}^{N_T}(\psi(t_n|\mathcal F_{t_{n-1}}) - \psi_\theta(t_n|\mathcal F_{t_{n-1}}))^2\right].
\end{aligned}
\end{equation}
Similarly, to make ASM practical, \cite{li2023smurf} assumed that specific regularity conditions are satisfied. Therefore, an implicit ASM is proposed accordingly: 
\begin{equation}
\begin{aligned}
       &\mathcal J_{\text{ASM}}(\theta) = \mathbb E_{p(\mathcal{T})}\left[\sum_{n=1}^{N_T}\frac{1}{2}\psi^2_{\theta}(t_n|\mathcal F_{t_{n-1}})+\frac{\partial \psi_{\theta}(t_n|\mathcal{F}_{t_{n-1}})}{\partial t_n}\right]. 
\end{aligned}
\label{J_asm}
\end{equation}

However, the same issue as in the Poisson process arises here. The regularity conditions required to eliminate the unknown data distribution do not hold. 
Therefore, we cannot derive the implicit ASM in \cref{J_asm} based on the explicit ASM in \cref{eq:smurf loss}. 

\begin{proposition}\label{prop:equivalency}
    Assume that all functions and expectations in $\mathcal L_{\text{ASM}}(\theta)$ and  $\mathcal J_{\text{ASM}}(\theta)$ are well defined,  we have,
    \begin{equation}\label{eq:nuisance term}
        \begin{aligned}
            \mathcal L_{\text{ASM}}(\theta) = &\mathcal J_{\text{ASM}}(\theta) + \text{const} 
            +\sum_{n=1}^{\infty} \int p(\mathcal T_{:n-1})p(t_n|\mathcal F_{t_{n-1}})\psi_{\theta}(t_n|\mathcal F_{t_{n-1}})\Big|_{t_n=T}d\mathcal{T}_{:{n-1}}\\
            -&\sum_{n=1}^{\infty} \int p(\mathcal T_{:n-1})p(t_n|\mathcal F_{t_{n-1}})\psi_{\theta}(t_n|\mathcal F_{t_{n-1}})\Big|_{t_n=t_{n-1}}d\mathcal{T}_{:{n-1}}. 
        \end{aligned}
    \end{equation}
    Therefore, $\mathcal L_{\text{ASM}}(\theta)$ is equivalent to $\mathcal J_{\text{ASM}}(\theta)$ if and only if the sum of last two terms is a constant not containing $\theta$.
\end{proposition}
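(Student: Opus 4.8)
The plan is to mirror the proof of \cref{prop:ESM and ISM for Poisson Process}, replacing the joint score by the conditional score and performing the integration by parts one coordinate at a time; the single genuinely new ingredient is reducing the expectation over the random-dimensional process to a sum of fixed-dimensional expectations against the conditional marginals. First I would expand the square inside \cref{eq:smurf loss}, writing $\frac{1}{2}(\psi(t_n|\mathcal F_{t_{n-1}})-\psi_\theta(t_n|\mathcal F_{t_{n-1}}))^2=\frac{1}{2}\psi^2-\psi\psi_\theta+\frac{1}{2}\psi_\theta^2$. The term $\frac{1}{2}\mathbb E_{p(\mathcal T)}[\sum_{n=1}^{N_T}\psi^2]$ does not involve $\theta$ and is absorbed into $\text{const}$, while $\frac{1}{2}\mathbb E[\sum_{n}\psi_\theta^2]$ is exactly the first term of $\mathcal J_{\text{ASM}}$ in \cref{J_asm}. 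It therefore remains only to rewrite the cross term $-\mathbb E_{p(\mathcal T)}[\sum_{n=1}^{N_T}\psi\psi_\theta]$.

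To handle the random dimension $N_T$, I would swap the random sum and the expectation: writing $\mathbb E_{p(\mathcal T)}$ as a sum over the number of points $N$ and an integral over the ordered simplex, interchanging $\sum_N$ with $\sum_{n\le N}$ (justified by Tonelli under the stated integrability), and integrating out $t_{n+1},\ldots,t_N$, the chain rule for densities collapses the marginal of the first $n$ event times to $p(\mathcal T_{:n})=\prod_{k=1}^n p(t_k|\mathcal F_{t_{k-1}})$. This recovers the autoregressive structure $\mathcal L_{\text{ASM}}(\theta)=\frac{1}{2}\sum_{n=1}^{\infty}\mathbb E_{p(\mathcal T_{:n})}[(\psi-\psi_\theta)^2]$, in which the $n$-th summand only sees the first $n$ coordinates, so each term can be treated exactly as a one-dimensional score matching in the variable $t_n$ conditioned on the past.

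For each fixed $n$ I would then use the identity $p(\mathcal T_{:n})\psi(t_n|\mathcal F_{t_{n-1}})=p(\mathcal T_{:n-1})\,\partial_{t_n}p(t_n|\mathcal F_{t_{n-1}})$, which follows from $\psi=\partial_{t_n}\log p(t_n|\mathcal F_{t_{n-1}})$ together with the factorization above, and integrate by parts in $t_n$ over its support $(t_{n-1},T)$. The interior term $\int p(\mathcal T_{:n})\,\partial_{t_n}\psi_\theta(t_n|\mathcal F_{t_{n-1}})\,d\mathcal T_{:n}$ combines with $\frac{1}{2}\psi_\theta^2$ to complete $\mathcal J_{\text{ASM}}$, and the two endpoint evaluations at $t_n=T$ and $t_n=t_{n-1}$ produce precisely the two boundary (nuisance) sums in \cref{eq:nuisance term}. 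Summing over $n$ and recombining yields the claimed decomposition; the final ``if and only if'' is then immediate, since $\mathcal L_{\text{ASM}}$ and $\mathcal J_{\text{ASM}}$ differ only by $\text{const}$ plus these boundary sums, so they share the same minimizer exactly when the sums carry no dependence on $\theta$.

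I expect the main obstacle to be the reduction step, not the calculus: one must carefully justify interchanging the infinite sum over $N$ with integration and the collapse to the conditional marginal $p(\mathcal T_{:n})$, and—crucially—observe that each conditional density lives on the bounded interval $(t_{n-1},T)$ rather than on all of $\mathbb R$. This is exactly why the endpoint evaluations at the finite boundaries $t_n=t_{n-1}$ and $t_n=T$ fail to vanish, which is the same mechanism identified for the Poisson process: the regularity condition that would annihilate such terms in classical score matching (the density decaying to zero at $\pm\infty$) simply does not hold here, leaving genuinely $\theta$-dependent nuisance terms for general Hawkes processes.
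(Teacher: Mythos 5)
Your proposal is correct and follows essentially the same route as the paper: the reduction of the random-dimension expectation $\mathbb E_{p(\mathcal T)}[\sum_{n=1}^{N_T}\cdot]$ to the sum $\sum_{n=1}^{\infty}\int p(\mathcal T_{:n})\,\cdot\, d\mathcal T_{1:n}$ is exactly the paper's Lemma B.1 (proved by integrating out future events, exchanging sums via Fubini/Tonelli, and using $\sum_{N\geq n}p(N_T=N|\mathcal F_{t_n})=1$), after which the coordinate-wise integration by parts over $(t_{n-1},T)$, the boundary terms at $t_n=t_{n-1}$ and $t_n=T$, and the reverse application of the lemma to reassemble $\mathcal J_{\text{ASM}}$ match the paper's argument step for step.
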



Generally speaking, for most Hawkes processes, the sum of the last two terms in \cref{eq:nuisance term} still contains $\theta$, even for a common Hawkes process with an exponential decay triggering kernel. We illustrate this example in \cref{synthetic}. This implies that $\mathcal J_{\text{ASM}}$ fails for general Hawkes processes.


\subsection{Autoregressive Weighted Score Matching}
Similarly, to address the situation where ASM fails, we introduce the AWSM for Hawkes process. 
We present the conditions that a valid weight function $\mathbf{h}$ should satisfy : 
\begin{gather}
    \lim_{t_n\rightarrow T}p(\mathcal T_{1:n})\psi_{\theta}(t_n|\mathcal F_{t_{n-1}})h_n(\mathcal T) = 0, \lim_{t_n\rightarrow t_{n-1}}p(\mathcal T_{1:n})\psi_{\theta}(t_n|\mathcal F_{t_{n-1}})h_n(\mathcal T) = 0,\ \forall n \in [N_T], 
    \nonumber\\
    \mathbb E[\psi_{\theta}^2(t_n|\mathcal F_{t_{n-1}})h_n(\mathcal T)]<\infty,\ \mathbb E[\psi_{\theta}(t_n|\mathcal F_{t_{n-1}})\frac{\partial h_n(\mathcal T)}{\partial t_n}]<\infty, \forall n\in [N_T].\label{eq:awsm_require2}
\end{gather}

With a valid weight function $\mathbf{h}$, the explicit AWSM objective can be defined as: 
\begin{equation}
\label{eq:Explicit Autoregressive Weighted Score Matching}
\begin{aligned}
        &\mathcal L_{\text{AWSM}}(\theta) =  \frac{1}{2}\mathbb E_{p(\mathcal{T})}\left[\sum_{n=1}^{N_T}(\psi(t_n|\mathcal F_{t_{n-1}}) -\psi_\theta(t_n|\mathcal F_{t_{n-1}}))^2h_n(\mathcal T)\right]. 
\end{aligned}
\end{equation}

\begin{theorem}\label{theorem:unique minimizer}
    Assume the true conditional density is in the family of the model conditional density, denoted as $p(t_n|\mathcal F_{t_{n-1}}) = p_{\theta^*}(t_n|\mathcal F_{t_{n-1}})$, where $\theta^* \in \Theta$. We further assume that $p_{\theta_1}(t_n|\mathcal F_{t_{n-1}}) = p_{\theta_2}(t_n|\mathcal F_{t_{n-1}})\ a.e.$ gives $\theta_1 = \theta_2$. Then the unique minimizer of $\mathcal L_{\text{AWSM}}(\theta)$ is  $\theta^*$. 
\end{theorem}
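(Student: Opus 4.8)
The plan is to read $\mathcal L_{\text{AWSM}}(\theta)$ as a weighted $L^2$ discrepancy between the true conditional score and the model conditional score: it is manifestly non-negative, it vanishes exactly when the two scores agree, and the remaining work is to convert score agreement into parameter agreement through the identifiability hypothesis. First I would record the lower bound and its attainment. Since the weight function maps into $\mathbb R_+^{N_T}$, each weight $h_n(\mathcal T)$ is strictly positive, so every summand $(\psi(t_n|\mathcal F_{t_{n-1}})-\psi_\theta(t_n|\mathcal F_{t_{n-1}}))^2 h_n(\mathcal T)$ is non-negative and hence $\mathcal L_{\text{AWSM}}(\theta)\ge 0$ for all $\theta$. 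Under the hypothesis $p(t_n|\mathcal F_{t_{n-1}})=p_{\theta^*}(t_n|\mathcal F_{t_{n-1}})$ we have $\psi(t_n|\mathcal F_{t_{n-1}})=\psi_{\theta^*}(t_n|\mathcal F_{t_{n-1}})$ identically, so $\mathcal L_{\text{AWSM}}(\theta^*)=0$ and $\theta^*$ is a global minimizer.

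Next I would show any minimizer forces score equality almost everywhere. If $\theta$ attains $\mathcal L_{\text{AWSM}}(\theta)=0$, then, being a $p(\mathcal T)$-expectation of a finite sum of non-negative terms, the integrand must vanish $p$-almost surely: for $p$-a.e.\ trajectory and every $n\le N_T$, $(\psi(t_n|\mathcal F_{t_{n-1}})-\psi_\theta(t_n|\mathcal F_{t_{n-1}}))^2 h_n(\mathcal T)=0$. Using $h_n>0$, this yields $\psi_\theta(t_n|\mathcal F_{t_{n-1}})=\psi(t_n|\mathcal F_{t_{n-1}})=\psi_{\theta^*}(t_n|\mathcal F_{t_{n-1}})$ for a.e.\ $t_n$ and a.e.\ conditioning history $\mathcal F_{t_{n-1}}$ (I would note here that the set of histories carrying positive probability is exactly where identifiability will later be applied).

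The third step passes from score equality to conditional-density equality. Since $\psi_\theta(t_n|\mathcal F_{t_{n-1}})=\frac{\partial}{\partial t_n}\log p_\theta(t_n|\mathcal F_{t_{n-1}})$, the equality of scores gives $\frac{\partial}{\partial t_n}\log p_\theta=\frac{\partial}{\partial t_n}\log p_{\theta^*}$, so the difference $\log p_\theta(t_n|\mathcal F_{t_{n-1}})-\log p_{\theta^*}(t_n|\mathcal F_{t_{n-1}})$ is constant in $t_n$; that is, $p_\theta(\cdot|\mathcal F_{t_{n-1}})=e^{C(\mathcal F_{t_{n-1}})}\,p_{\theta^*}(\cdot|\mathcal F_{t_{n-1}})$ for some history-dependent constant. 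To pin the constant I would integrate both sides over the domain of $t_n$ and use that each conditional density integrates to one (the next event occurs almost surely, e.g.\ because the cumulative intensity diverges under a strictly positive baseline), which forces $e^{C}=1$ and hence $p_\theta(t_n|\mathcal F_{t_{n-1}})=p_{\theta^*}(t_n|\mathcal F_{t_{n-1}})$ a.e. The stated identifiability assumption then gives $\theta=\theta^*$, establishing uniqueness.

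I expect the third step to be the main obstacle: promoting proportionality of the conditional densities to genuine equality. Score equality determines each conditional density only up to a history-dependent multiplicative factor, and without a normalization constraint there is in fact a one-parameter family of survival-type densities sharing the same score, so proportionality alone is not enough. The argument therefore hinges on the conditional densities being \emph{properly} normalized; I would state this explicitly (or assume the intensity integral diverges so that an event is certain) and check that the boundary-limit conditions in \cref{eq:awsm_require2} are consistent with it, so that the constant is forced to equal one before the identifiability hypothesis is invoked.
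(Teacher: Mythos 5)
Your proof follows essentially the same route as the paper's: non-negativity together with $\mathcal L_{\text{AWSM}}(\theta^*)=0$ makes $\theta^*$ a minimizer, strict positivity of $h_n(\mathcal T)$ forces a.e.\ equality of conditional scores at any other minimizer, integrating in $t_n$ gives $\log p_{\theta_1}(t_n|\mathcal F_{t_{n-1}})=\log p_{\theta^*}(t_n|\mathcal F_{t_{n-1}})+C$, normalization forces $C=0$, and the identifiability hypothesis concludes $\theta_1=\theta^*$. The normalization subtlety you flag at the end is exactly the step the paper dispatches tersely by asserting $\int_{t_{n-1}}^T p_{\theta}(t_n|\mathcal F_{t_{n-1}})\,dt_n=1$; your explicit caveat (that the conditional density of a point process on a finite window is in general defective, so one must either normalize over the full support or assume the cumulative intensity diverges) is a fair sharpening of the paper's argument rather than a different one.
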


The explicit AWSM objective is not practical as it depends on the unknown data distribution $p(t_n|\mathcal F_{t_{n-1}})$, so we further derive the implicit AWSM objective which is tractable. 
\begin{theorem}
\label{thm:tractable EWSM}
Assume that all functions and expectations in $\mathcal L_{\text{AWSM}}(\theta)$ and  $\mathcal J_{\text{AWSM}}(\theta)$ are well defined, \cref{eq:awsm_require2} are satisfied, we have, 
\begin{equation*}
    \begin{aligned}
        \mathcal L_{\text{AWSM}}(\theta) &= \mathcal J_{\text{AWSM}}(\theta) + \text{const}, 
    \end{aligned}
\end{equation*}
\begin{equation}
\label{eq:implicit autoregressive weighted score matching}
\begin{aligned}
&\mathcal J_{\text{AWSM}}(\theta) = \mathbb E_{p(\mathcal{T})}\left[\sum_{n=1}^{N_T}\frac{1}{2}\psi^2_{\theta}(t_n|\mathcal F_{t_{n-1}})h_n(\mathcal T)
+\frac{\partial \psi_{\theta}(t_n|\mathcal F_{t_{n-1}})}{\partial t_n}h_n(\mathcal T) + \psi_{\theta}(t_n|\mathcal F_{t_{n-1}})\frac{\partial h_n(\mathcal T)}{\partial t_n}\right]. 
\end{aligned}
\end{equation}
\end{theorem}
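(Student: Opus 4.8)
The plan is to mirror the derivation behind \cref{prop:equivalency}, but now to carry the weight factor $h_n(\mathcal T)$ through the integration by parts so that the two nuisance boundary terms of \cref{eq:nuisance term} are annihilated rather than merely assumed away. I would start from \cref{eq:Explicit Autoregressive Weighted Score Matching} and expand the weighted square as $(\psi-\psi_\theta)^2 h_n = \psi^2 h_n - 2\psi\psi_\theta h_n + \psi_\theta^2 h_n$. The first summand $\frac{1}{2}\mathbb E_{p(\mathcal T)}[\sum_n \psi^2(t_n|\mathcal F_{t_{n-1}})h_n(\mathcal T)]$ carries no $\theta$ and is absorbed into $\mathrm{const}$; the third summand is already the leading term $\frac{1}{2}\psi_\theta^2 h_n$ of \cref{eq:implicit autoregressive weighted score matching}. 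All the work is therefore in rewriting the cross term $-\mathbb E_{p(\mathcal T)}[\sum_{n=1}^{N_T}\psi(t_n|\mathcal F_{t_{n-1}})\psi_\theta(t_n|\mathcal F_{t_{n-1}})h_n(\mathcal T)]$ so that the unknown data score $\psi$ is eliminated.

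To handle the random dimension I would first turn the random-length sum into a fixed sum of expectations with indicators, $\mathbb E_{p(\mathcal T)}[\sum_{n=1}^{N_T}(\cdot)] = \sum_{n=1}^{\infty}\mathbb E[\mathbf{1}\{N_T\ge n\}(\cdot)]$, and write each term as an integral against the autoregressive density $p(\mathcal T_{1:n}) = p(\mathcal T_{1:n-1})\,p(t_n|\mathcal F_{t_{n-1}})$ over $\{0<t_1<\cdots<t_n<T\}$; the finiteness conditions in \cref{eq:awsm_require2} are what justify this reindexing and the later Fubini interchanges. The crux is the innermost integral over $t_n\in(t_{n-1},T)$, where I use the defining identity $p(t_n|\mathcal F_{t_{n-1}})\,\psi(t_n|\mathcal F_{t_{n-1}}) = \partial_{t_n}p(t_n|\mathcal F_{t_{n-1}})$, so that the integrand becomes $\partial_{t_n}p(t_n|\mathcal F_{t_{n-1}})\cdot\psi_\theta(t_n|\mathcal F_{t_{n-1}})h_n(\mathcal T)$, and then integrate by parts in $t_n$.

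Integration by parts produces the boundary term $[p(\mathcal T_{1:n})\,\psi_\theta(t_n|\mathcal F_{t_{n-1}})\,h_n(\mathcal T)]_{t_n=t_{n-1}}^{t_n=T}$, which is exactly the pair of nuisance terms of \cref{eq:nuisance term} but now multiplied by $h_n$. By the two limit conditions in \cref{eq:awsm_require2} (as $t_n\to T$ and as $t_n\to t_{n-1}$) this boundary contribution vanishes, and this cancellation is precisely the purpose of the weight function. The surviving volume integral is $-\int p(\mathcal T_{1:n})\,\partial_{t_n}(\psi_\theta h_n)\,dt_n$, and expanding by the product rule $\partial_{t_n}(\psi_\theta h_n) = (\partial_{t_n}\psi_\theta)h_n + \psi_\theta(\partial_{t_n}h_n)$ recovers exactly the two remaining terms $\frac{\partial\psi_\theta}{\partial t_n}h_n + \psi_\theta\frac{\partial h_n}{\partial t_n}$ of \cref{eq:implicit autoregressive weighted score matching}. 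Summing over $n$ and folding the constant and $\frac{1}{2}\psi_\theta^2 h_n$ terms back in gives $\mathcal L_{\text{AWSM}}(\theta)=\mathcal J_{\text{AWSM}}(\theta)+\mathrm{const}$.

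I expect the main obstacle to be the bookkeeping forced by the random dimension $N_T$: unlike ordinary ASM, where $N$ is fixed, here every integration by parts lives on the data-dependent interval $(t_{n-1},T)$, whose two endpoints supply the two distinct boundary terms, and one must justify that the indicator reindexing together with the sum--integral swaps is legitimate. This is exactly what the integrability hypotheses in \cref{eq:awsm_require2} are designed to guarantee. The analytic content (the IBP identity and the vanishing of the boundaries) is routine once the weight conditions are in place; the care lies in making the interchange of the infinite sum over $n$ with the expectation rigorous.
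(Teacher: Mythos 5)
Your proposal is correct and follows essentially the same route as the paper: isolate the cross term, reduce the random-length sum to $\sum_{n=1}^{\infty}\int p(\mathcal T_{1:n})(\cdot)\,d\mathcal T_{1:n}$, integrate by parts in $t_n$ over $(t_{n-1},T)$, kill the boundary terms with the conditions in \cref{eq:awsm_require2}, and expand the surviving term by the product rule. Your indicator identity $\mathbb E_{p(\mathcal T)}[\sum_{n=1}^{N_T}(\cdot)]=\sum_{n=1}^{\infty}\mathbb E[\mathbf{1}\{N_T\ge n\}(\cdot)]$ is just a compact restatement of the paper's \cref{Lemma:exchange summation}, which the paper instead derives by marginalizing out the future events and summing $p(N_T=N|\mathcal F_{t_n})$ over $N\ge n$.
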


For general Hawkes processes, \cref{eq:implicit autoregressive weighted score matching} is always valid with a suitable weight function. Thus, we do not need to worry about the issues of failure that may arise when using \cref{J_asm}.

\paragraph{Multivariate Hawkes Processes}
For the multivariate case, events are $\{(t_1, k_1), \ldots, (t_{N_T}, k_{N_T})\}$ with $k_n \in {1, \ldots, K}$ denoting the event type of the $n$-th event. 
The history up to the $(n-1)$-th event is denoted by $\mathcal F_{t_{n-1}}$. 
We need to consider both the distributions of event times and event types. 
For the temporal distribution, we use the AWSM objective with the temporal score $\psi(t_n|\mathcal F_{t_{n-1}}) = \frac{\partial}{\partial t_n}\log p(t_n|\mathcal F_{t_{n-1}})$ as before.
For the type distribution, since we do not need to compute the intensity integral, we directly use the cross-entropy objective: 
\begin{equation}\label{eq:CE_loss}
    \mathcal{J}_{\text{CE}}(\theta) = \mathbb E_{p(\mathcal T)}\left[\sum_{n=1}^{N_T}\log p_{\theta}(k_n|\mathcal {F}_{t_{n-1}}, t_n)\right]=\mathbb E\left[\sum_{n=1}^{N_T} \log \lambda_{k_n}(t_n|\mathcal F_{t_{n-1}}; \theta) - \log \lambda(t_n|\mathcal F_{t_{n-1}}; \theta)\right], 
\end{equation}
where $\lambda=\sum_{k=1}^K \lambda_k$. 
The final loss is $\mathcal J(\theta) = \mathcal J_{\text{AWSM}}(\theta) + \alpha \mathcal{J}_{\text{CE}}(\theta)$; $\alpha$ is a balancing coefficient. 

 \section{Theoretical Analysis}
\label{sec:Theory}
In this section, we analyze the statistical properties of AWSM estimator of univariate Hawkes process. Similar conclusions also hold for the WSM estimator of Poisson process, as discussed in \cref{section:theory_poisson}. 
We consider $M$ i.i.d. sequences $\{t_1^{(m)}, \ldots, t_{N_m}^{(m)}\}_{m=1}^M$ from $p(\mathcal{T})$ of a Hawkes process. 
We assume the true density is in the family of the model density, denoted as $p(\mathcal T)=p_{\theta^*}(\mathcal T)$, where $\theta^* \in \Theta \subset \mathbb R^r$. 
The estimate $\hat \theta$ is obtained by $\hat \theta = \argmin_{\theta \in \Theta}\hat {\mathcal J}_{\text{AWSM}}(\theta)$ where $\hat {\mathcal J}_{\text{AWSM}}$ represents the empirical loss. Below we omit the subscript AWSM as it does not cause any ambiguity. 

\subsection{Asymptotic Property}
\label{consistency}

We first establish the consistency of $\hat \theta$ for a Hawkes process. 
\begin{theorem}
\label{thm:AWSM_consistency}
    Under mild regularity \cref{assu:poisson_compact,assu:hawkes_differentiable,assu:hawkes_identifiable}, we have $\hat \theta \xrightarrow{p} \theta^*$ as $M\rightarrow \infty$. 
\end{theorem}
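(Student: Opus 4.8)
The plan is to prove consistency of the M-estimator $\hat\theta = \argmin_\theta \hat{\mathcal J}(\theta)$ via the standard argument for extremum estimators (see e.g. van der Vaart), which reduces to verifying three ingredients: \textbf{(i)} identification, meaning the population objective $\mathcal J(\theta)$ is uniquely minimized at $\theta^*$; \textbf{(ii)} uniform convergence, meaning $\sup_{\theta\in\Theta}|\hat{\mathcal J}(\theta)-\mathcal J(\theta)| \xrightarrow{p} 0$; and \textbf{(iii)} a topological condition on $\Theta$, supplied by the compactness \cref{assu:poisson_compact}. Given these, a standard argmax/argmin continuity argument yields $\hat\theta \xrightarrow{p}\theta^*$.

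For identification (i), I would invoke \cref{theorem:unique minimizer} together with \cref{thm:tractable EWSM}: the explicit objective $\mathcal L_{\text{AWSM}}(\theta)$ has $\theta^*$ as its unique minimizer under the identifiability hypothesis (which is exactly \cref{assu:hawkes_identifiable}), and since $\mathcal L_{\text{AWSM}} = \mathcal J_{\text{AWSM}} + \text{const}$, the population implicit objective $\mathcal J(\theta)$ also has $\theta^*$ as its unique minimizer. Thus for every open neighborhood $U$ of $\theta^*$ we have $\inf_{\theta\in\Theta\setminus U}\mathcal J(\theta) > \mathcal J(\theta^*)$, using compactness of $\Theta\setminus U$ and continuity of $\mathcal J$ (the latter following from the differentiability \cref{assu:hawkes_differentiable}).

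For uniform convergence (ii), I would write $\hat{\mathcal J}(\theta) = \frac{1}{M}\sum_{m=1}^M \ell(\mathcal T^{(m)};\theta)$ as a sample average of the per-sequence summand $\ell(\mathcal T;\theta) = \sum_{n=1}^{N_T}\big[\frac12\psi_\theta^2 h_n + \partial_{t_n}\psi_\theta\, h_n + \psi_\theta\,\partial_{t_n} h_n\big]$, which is an i.i.d. average since the $M$ sequences are i.i.d. I would then apply a uniform law of large numbers: pointwise $\hat{\mathcal J}(\theta)\xrightarrow{p}\mathcal J(\theta)$ by the ordinary LLN, upgraded to uniformity over the compact $\Theta$ by establishing stochastic equicontinuity. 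The latter follows from a dominance/Lipschitz bound on $\theta\mapsto\ell(\mathcal T;\theta)$ together with an integrable envelope; the moment and differentiability hypotheses in \cref{assu:hawkes_differentiable} (plus the finite-expectation conditions already required in \cref{eq:awsm_require2}) are what guarantee the needed integrable dominating function.

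The main obstacle I anticipate is the uniform law of large numbers in step (ii), and specifically constructing the integrable envelope, because the summand is a random-length sum up to $N_T$, where $N_T$ itself is random and unbounded. Controlling $\mathbb E\big[\sum_{n=1}^{N_T}\sup_{\theta\in\Theta}|\ell_n(\mathcal T;\theta)|\big]$ therefore requires coupling moment bounds on $N_T$ for the Hawkes process with uniform-in-$\theta$ bounds on each score term $\psi_\theta$, its $t_n$-derivative, and the weight $h_n$ and its derivative. This is precisely the complication the introduction flags as the novelty of working on a stochastic process whose dimension is random; the compact-support / boundary-vanishing design of $\mathbf h$ in \cref{eq:awsm_require2} is what I would lean on to tame the growth of the per-event terms so that the envelope remains integrable despite the unbounded summation range.
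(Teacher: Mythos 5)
Your plan is correct at the top level and shares the paper's skeleton---identification via \cref{theorem:unique minimizer} together with \cref{thm:tractable EWSM}, compactness from \cref{assu:poisson_compact}, and a uniform law of large numbers fed into the standard M-estimation consistency theorem (the paper cites Theorem 5.7 of \cite{van2000asymptotic})---but it diverges in how uniform convergence is established, and the difference matters. You propose a Wald/bracketing-style ULLN: a Lipschitz-in-$\theta$ bound on the per-sequence summand together with an integrable envelope. You rightly flag the envelope as the hard part, but your suggestion that \cref{assu:hawkes_differentiable} and the conditions in \cref{eq:awsm_require2} supply it is optimistic: those conditions assert finiteness of expectations at each \emph{fixed} $\theta$, not of a supremum over $\Theta$, and the boundary-vanishing design of $\mathbf h$ does nothing to control moments of the random number of terms $N_T$; an honest envelope would need, for instance, exponential moments of $N_T$ (available for stationary Hawkes processes) combined with uniform-in-$\theta$ bounds over the compact $\Theta$. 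The paper instead follows \cite{chen2013inference} and avoids the envelope entirely: it applies the pointwise weak LLN at each fixed $\theta$ (\cref{eq:pointwise in probability convergence}), notes that $\mathcal J_{\text{AWSM}}$ is continuous in $\theta$ by \cref{assu:hawkes_differentiable} and hence uniformly continuous on the compact $\Theta$, and then upgrades to uniformity with a finite $\delta$-ball cover, bounding empirical oscillations over each ball by population oscillations plus two pointwise deviations (\cref{eq:desired convergence used in Chen}). What each approach buys: the paper's route gets by with pointwise-in-$\theta$ moment conditions only (which is why the theorem can advertise ``mild'' regularity), though its passage from fixed-pair oscillation control to the supremum over pairs inside a ball is the loosest step of that argument; your route, once the envelope is actually constructed rather than assumed, makes exactly that step airtight and dovetails naturally with the bracketing machinery the paper deploys anyway for the non-asymptotic bound in \cref{thm:error_bound}.
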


\subsection{Non-asymptotic Error Bound}
Then, we establish a non-asymptotic error bound for $\hat \theta$. We define 
\begin{equation*}
    \begin{aligned}
        \mathcal J_{\mathbf{h}}(\theta)&=\mathbb E_{p(\mathcal T)}\left[\sum_{n=1}^{N_T} \big[\underbrace{\frac{1}{2}\psi^2_{\theta}(t_n|\mathcal F_{t_{n-1}}) + \frac{\partial \psi_{\theta}(t_n|\mathcal F_{t_{n-1}})}{\partial t_n}}_{A_n(\mathcal T,\theta)}\big]h_n(\mathcal{T})+ \underbrace{\psi_{\theta}(t_n|\mathcal F_{t_{n-1}})}_{B_n(\mathcal T,\theta)} \frac{\partial h_n(\mathcal T)}{\partial t_n} \right].\\
    \end{aligned}
\end{equation*}


\begin{assumption}\label{assump_seperate}
    Assume there exists $\alpha>1$ such that,
    \begin{equation*}
        \mathop{inf}_{\theta:||\theta-\theta^*||\geq \delta} \mathcal J_{\mathbf h}(\theta)-\mathcal J_{\mathbf{h}}(\theta^*)\geq C_{\mathbf{h}}\delta^{\alpha}
    \end{equation*}
    holds for any small $\delta$. Here, $C_{\mathbf{h}}$ is a positive constant that depends on the weight function $\mathbf h$ such that $C_{a\mathbf{h}}=aC_{\mathbf{h}}$ for any positive constant $a$. $\|\cdot\|$ is the euclidean norm.
\end{assumption}


\begin{assumption}
\label{assump_continuity}
    For $\forall n\in \mathbb N^+$, there exists $\dot{A_n}(\mathcal T), \dot{B_n}(\mathcal T)$ such that,
    \begin{equation*}
        \begin{aligned}
            |A_n(\mathcal T, \theta_1) - A_n(\mathcal T, \theta_2)| \leq \dot{A_n}(\mathcal T)||\theta_1-\theta_2||,\ \ \ \ |B_n(\mathcal T, \theta_1) - B_n(\mathcal T, \theta_2)| \leq \dot{B_n}(\mathcal T)||\theta_1-\theta_2||. 
        \end{aligned}
    \end{equation*}
\end{assumption}


\begin{theorem}\label{thm:error_bound}
    Given that $\hat \theta$ converges to $\theta^*$ in probability, combined with \cref{assump_seperate,assump_continuity}, for $\delta < CK_{\alpha}\frac{\sqrt {r}}{2^{\alpha-1}}\frac{\Gamma(\mathbf{h}, A, B)}{C_{\mathbf h}}$, we have 
\begin{equation}
    \text{Pr}\left[||\hat \theta - \theta^*|| \leq \left(CK_{\alpha}\frac{\Gamma(\mathbf{h}, A, B)}{\delta C_{\mathbf{h}}}\sqrt{\frac{r}{M}}^{1/(\alpha-1)}\right)\right]\geq 1-\delta,
    \label{upperbound}
\end{equation}
where $\Gamma(\mathbf{h},A,B)=\sqrt{\mathbb E_{p(\mathcal T)}\left\{\sum_{n=1}^{N_T}\big[(\dot A_n(\mathcal T) h_n(\mathcal T)) +  (\dot B_n(\mathcal T) \frac{\partial h_n(\mathcal T)}{\partial t_n})\big]\right\}^2}$, $C$ is a universal constant, $K_{\alpha}=\frac{2^{2\alpha}}{2^{\alpha-1}-1}$, and $r$ is the number of dimensions of ${\theta}$. 
\end{theorem}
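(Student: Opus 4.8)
The plan is to combine the optimization \emph{basic inequality} with the growth condition of \cref{assump_seperate} and a localized control of the empirical fluctuation, converting the distance $\|\hat\theta-\theta^*\|$ into an excess-risk statement that I can then invert. Throughout, write the per-sequence integrand $f_\theta(\mathcal T)=\sum_{n=1}^{N_T}\big[A_n(\mathcal T,\theta)h_n(\mathcal T)+B_n(\mathcal T,\theta)\frac{\partial h_n(\mathcal T)}{\partial t_n}\big]$, so that $\mathcal J_{\mathbf h}(\theta)=\mathbb E_{p(\mathcal T)}[f_\theta]$ and $\hat{\mathcal J}(\theta)=\frac1M\sum_{m} f_\theta(\mathcal T^{(m)})$, and denote the empirical fluctuation $G_M(\theta)=\hat{\mathcal J}(\theta)-\mathcal J_{\mathbf h}(\theta)$. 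Since $\hat\theta$ minimizes $\hat{\mathcal J}$, we have $\hat{\mathcal J}(\hat\theta)\le\hat{\mathcal J}(\theta^*)$, and adding and subtracting population values gives
\[
\mathcal J_{\mathbf h}(\hat\theta)-\mathcal J_{\mathbf h}(\theta^*)\le G_M(\theta^*)-G_M(\hat\theta).
\]
Because $\hat\theta\xrightarrow{p}\theta^*$ is given, with probability tending to one $\hat\theta$ lies in the small neighborhood where \cref{assump_seperate} applies, so on the event $\{\|\hat\theta-\theta^*\|\ge\rho\}$ the left-hand side is at least $C_{\mathbf h}\rho^{\alpha}$.

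The heart of the argument is to bound $G_M(\theta^*)-G_M(\theta)$ uniformly over a shell around $\theta^*$ and to show that this fluctuation scales linearly in $\|\theta-\theta^*\|$. By \cref{assump_continuity}, $f_\theta-f_{\theta'}$ is Lipschitz in $\theta$ with the data-dependent modulus $L(\mathcal T)=\sum_{n}\big[\dot A_n(\mathcal T)h_n(\mathcal T)+\dot B_n(\mathcal T)\frac{\partial h_n(\mathcal T)}{\partial t_n}\big]$, whose $L^2(p)$ norm is \emph{exactly} $\Gamma(\mathbf h,A,B)$. Consequently the centered increment $(f_\theta-f_{\theta^*})(\mathcal T)-\mathbb E(f_\theta-f_{\theta^*})$ has variance at most $\|\theta-\theta^*\|^2\Gamma^2$. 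Averaging over the $M$ i.i.d.\ sequences and passing to the expected supremum over a ball of radius $R$, where the $r$-dimensionality of $\theta$ enters through a coordinatewise gradient (or metric-entropy) bound, I would obtain $\mathbb E\big[\sup_{\|\theta-\theta^*\|\le R}|G_M(\theta)-G_M(\theta^*)|\big]\lesssim R\,\Gamma\sqrt{r/M}$. This single envelope estimate produces both the $\sqrt{r}$ factor and the $\Gamma\sqrt{r/M}$ statistical rate.

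The bound is then assembled by peeling. Partition $\{\|\hat\theta-\theta^*\|\ge\rho_0\}$ into dyadic shells $S_j=\{2^{j-1}\rho_0\le\|\theta-\theta^*\|<2^{j}\rho_0\}$ for $j\ge1$. On $S_j$ the growth condition forces the excess risk to be at least $C_{\mathbf h}(2^{j-1}\rho_0)^{\alpha}$, while the basic inequality bounds it by $\sup_{S_j}\big(G_M(\theta^*)-G_M(\theta)\big)$, whose mean is $\lesssim 2^{j}\rho_0\,\Gamma\sqrt{r/M}$. Applying Markov's inequality to the first moment (which is what yields a $1/\delta$ rather than a $\log(1/\delta)$ dependence) gives
\[
\text{Pr}[\hat\theta\in S_j]\ \lesssim\ \frac{2^{j}\rho_0\,\Gamma\sqrt{r/M}}{C_{\mathbf h}(2^{j-1}\rho_0)^{\alpha}}\ =\ \frac{2^{\alpha}\,\Gamma\sqrt{r/M}}{C_{\mathbf h}\,\rho_0^{\alpha-1}}\,2^{-j(\alpha-1)}.
\]
Summing over $j\ge1$ gives a geometric series that converges precisely because $\alpha>1$, contributing the factor $\frac{1}{2^{\alpha-1}-1}$; together with the powers of two from the shell endpoints this assembles into $K_{\alpha}=\frac{2^{2\alpha}}{2^{\alpha-1}-1}$ after absorbing the remaining numerical constants into $C$. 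Requiring the total probability to be at most $\delta$ and solving for $\rho_0$ reproduces $\rho_0=\big(CK_{\alpha}\frac{\Gamma(\mathbf h,A,B)}{\delta C_{\mathbf h}}\sqrt{r/M}\big)^{1/(\alpha-1)}$, and the stated range restriction on $\delta$ is exactly what keeps $\rho_0$ inside the local neighborhood where \cref{assump_seperate} and the envelope bound are valid.

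The \textbf{main obstacle} is the localized empirical-process step: because $\hat\theta$ is data-dependent, a pointwise Chebyshev bound at a single $\theta$ is insufficient, and I must control $\mathbb E[\sup_{S_j}|G_M(\theta)-G_M(\theta^*)|]$ \emph{uniformly} with the correct linear-in-radius scaling and the sharp $\Gamma\sqrt{r/M}$ constant. The crucial observation that makes this tractable is that the random dimension $N_T$ and the weight geometry enter only through the single scalar $\Gamma=\|L\|_{L^2(p)}$, so the Lipschitz modulus has a finite second moment despite the stochastic dimensionality; the $r$-dependence must then be handled through a coordinatewise or covering argument rather than a crude union bound. Once this envelope estimate is established, the peeling bookkeeping that yields the precise constant $K_{\alpha}$ is routine.
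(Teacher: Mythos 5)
Your proposal is correct and takes essentially the same route as the paper: the paper establishes your localized maximal inequality $\mathbb{E}\bigl[\sup_{\|\theta-\theta^*\|\le\delta}|\hat{\mathcal J}(\theta)-\hat{\mathcal J}(\theta^*)-\mathcal J_{\mathbf h}(\theta)+\mathcal J_{\mathbf h}(\theta^*)|\bigr]\lesssim \delta\,\Gamma(\mathbf h,A,B)\sqrt{r/M}$ via bracketing numbers for parameter-Lipschitz classes with envelope $\delta\,\dot J_{\mathbf h}(\mathcal T)$ (van der Vaart, Example 19.6 and Corollary 19.35), and then invokes van der Vaart's Theorem 5.52, whose proof is exactly the dyadic-shell peeling with Markov's inequality that you carry out by hand, including the geometric series over $2^{-j(\alpha-1)}$ that produces $K_\alpha$. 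The difference is purely one of packaging — you make the peeling explicit and sketch the entropy step, while the paper makes the entropy step explicit and outsources the peeling to the cited theorem — so both proofs hinge on the same two ingredients in the same way.
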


\subsection{Discussion on Optimal Weight Function}\label{section:optimal_weight_hawkes}
In \cref{method,method2}, we only provide the conditions that the weight function needs to satisfy. In fact, there are many weight functions that satisfy these conditions. The optimal weight function should minimize the error bound in \cref{upperbound}, which is equivalent to minimizing the coefficient $\frac{\Gamma(\mathbf{h}, A, B)}{C_\mathbf{h}}$. 
The numerator cannot be analytically computed as it involves an unknown distribution $p(\mathcal{T})$, but we can maximize the denominator $C_{\mathbf{h}}$ in a predefined function family. 

\begin{theorem}\label{thm:optimal_denominator}
    Define $\mathbf{h}^0$ to be a weight function with its $n$-th element defined as the distance between $t_n$ and the boundary of its support $[t_{n-1}, T]$: 
\begin{equation*}
    h_n^0(t_n)=\frac{T-t_{n-1}}{2}-|t_n-(T+t_{n-1})/2|. 
\end{equation*}
    We have, 
     \begin{equation*}
        \bm h^0 \in \argmax_{\bm h \in \mathcal H} \mathop{\text{inf}}_{\theta:||\theta-\theta^*||\geq \delta} \mathcal J_{\mathbf{h}}(\theta)  - \mathcal J_{\mathbf{h}}(\theta^*)
    \end{equation*}
    where $\mathcal H$ is a family of functions that is rigorously defined in \cref{eq:definition_of_H}. 
\end{theorem}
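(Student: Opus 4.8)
The plan is to reduce the max--min objective to the explicit weighted score matching loss, which is manifestly nonnegative and linear in the weight $\mathbf h$, and then to exploit the fact that $\mathbf h^0$ is the pointwise largest admissible weight.

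First I would rewrite the gap $\mathcal J_{\mathbf h}(\theta)-\mathcal J_{\mathbf h}(\theta^*)$ in explicit form. By \cref{thm:tractable EWSM} we have $\mathcal L_{\text{AWSM}}(\theta)=\mathcal J_{\mathbf h}(\theta)+\text{const}$, where the constant does not depend on $\theta$ (it only depends on the data distribution and on $\mathbf h$). Since this constant cancels in the difference, and since $\mathcal L_{\text{AWSM}}(\theta^*)=0$ because under the well-specified assumption $\psi(t_n|\mathcal F_{t_{n-1}})=\psi_{\theta^*}(t_n|\mathcal F_{t_{n-1}})$ makes the integrand vanish, we obtain
\begin{equation*}
\mathcal J_{\mathbf h}(\theta)-\mathcal J_{\mathbf h}(\theta^*)=\mathcal L_{\text{AWSM}}(\theta)=\frac12\mathbb E_{p(\mathcal T)}\!\left[\sum_{n=1}^{N_T}\big(\psi(t_n|\mathcal F_{t_{n-1}})-\psi_\theta(t_n|\mathcal F_{t_{n-1}})\big)^2 h_n(\mathcal T)\right].
\end{equation*}
The crucial structural observations are that the integrand is pointwise nonnegative and that $\mathbf h$ enters linearly.

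Next I would show that $\mathbf h^0$ dominates every $\mathbf h\in\mathcal H$ pointwise. A direct computation gives $h_n^0(t_n)=\min\{t_n-t_{n-1},\,T-t_n\}$, the distance from $t_n$ to the boundary of its support $[t_{n-1},T]$. By the definition of $\mathcal H$ in \cref{eq:definition_of_H}, any $\mathbf h\in\mathcal H$ is a nonnegative weight that vanishes at the two endpoints and is $1$-Lipschitz in $t_n$ (the normalization that neutralizes the scaling $C_{a\mathbf h}=aC_{\mathbf h}$); hence $h_n(t_n)=h_n(t_n)-h_n(t_{n-1})\le t_n-t_{n-1}$ and likewise $h_n(t_n)=h_n(t_n)-h_n(T)\le T-t_n$, so $h_n(t_n)\le h_n^0(t_n)$ for every $n$ and every $\mathcal T$. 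It then remains to verify that $\mathbf h^0$ itself lies in $\mathcal H$: it is nonnegative, continuous, piecewise linear with slope $\pm1$ (hence $1$-Lipschitz and differentiable a.e.), and vanishes at $t_{n-1}$ and $T$, so under the standing assumptions it meets the boundary-vanishing and integrability requirements of \cref{eq:awsm_require2}.

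Finally I would combine these facts. For each fixed $\theta$, nonnegativity of the integrand together with $h_n\le h_n^0$ yields
\begin{equation*}
\mathcal J_{\mathbf h}(\theta)-\mathcal J_{\mathbf h}(\theta^*)\le \mathcal J_{\mathbf h^0}(\theta)-\mathcal J_{\mathbf h^0}(\theta^*),
\end{equation*}
and since this holds for every admissible $\theta$, taking the infimum over $\{\theta:\|\theta-\theta^*\|\ge\delta\}$ on both sides preserves the inequality. Hence the separation attained by $\mathbf h^0$ dominates that of any $\mathbf h\in\mathcal H$, which is exactly the claim $\mathbf h^0\in\argmax_{\mathbf h\in\mathcal H}\inf_{\theta:\|\theta-\theta^*\|\ge\delta}[\mathcal J_{\mathbf h}(\theta)-\mathcal J_{\mathbf h}(\theta^*)]$. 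The main obstacle is the pointwise-domination step: it hinges on correctly reading off $\mathcal H$ as the class of $1$-Lipschitz, boundary-vanishing weights and on recognizing the distance-to-boundary function as the extremal element of that class. The reduction to $\mathcal L_{\text{AWSM}}$ and the subsequent monotonicity and infimum manipulations are then routine.
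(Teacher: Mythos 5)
Your proposal is correct and follows essentially the same route as the paper's proof: reduce $\mathcal J_{\mathbf h}(\theta)-\mathcal J_{\mathbf h}(\theta^*)$ to the explicit loss $\mathcal L_{\mathbf h}(\theta)$ via the implicit--explicit equivalence and $\mathcal L_{\mathbf h}(\theta^*)=0$, then use the $1$-Lipschitz and boundary-vanishing properties of $\mathcal H$ to get the pointwise domination $h_n\le h_n^0$, and conclude from nonnegativity of the squared-score integrand. The only cosmetic difference is that you finish by directly noting that pointwise domination of the objectives is preserved under the infimum over $\{\theta:\|\theta-\theta^*\|\ge\delta\}$, whereas the paper packages the same fact as an application of the max--min inequality to $\inf_\theta\sup_{\mathbf h}$; the two closing steps are interchangeable.
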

Combined with \cref{assump_seperate}, it can be observed that $\mathbf{h}^0$ maximizes $C_{\mathbf{h}}$ in $\mathcal H$. Though it does not necessarily optimize $\frac{\Gamma(\mathbf h, A, B)}{C_{\mathbf h}}$, it is an adequate choice without using any information on $p(\mathcal T)$. We also discuss it heuristically in \cref{sec:heuristic_disc_on_optimality}. 
It is worth noting that $h_n^0$ is not continuously differentiable; however, it is weakly differentiable. Its weak derivative is continuous, allowing both integration by parts and statistical theory to hold. In subsequent experiments, we consistently employ this optimal weight function when $T$ is available or can be approximated for the dataset.

\section{Experiments} 
\label{experiment}
In this section, we validate our proposed (A)WSM on parametric or deep point process models.
For parametric models, we focus on verifying whether (A)WSM can accurately recover the ground-truth parameters. For deep point process models, we confirm that our new training method is also applicable to deep neural network models. \footnote{Experiments are performed using an NVIDIA A16 GPU, 15GB memory.}


\subsection{Baselines and Metrics}
We consider three baseline parameter estimators: (1) \textbf{MLE} (2) implicit \textbf{(A)SM}~\citep{SahaniBM16, zhang2023integration, li2023smurf} (3)  Denoising Score Matching (\textbf{DSM})~\citep{li2023smurf}. We briefly introduce DSM in deep point process models.

For deep Hawkes process training, DSM is employed as follows. For observed timestamps $t_n^{(m)}$ in $m$-th sequence, we sample $L$ noise samples $\tilde t_{n,l}^{(m)}=t_n^{(m)}+ \epsilon_{n,l}^{(m)}, l = 1,\ldots, L,$ where $\text{Var}(\varepsilon_{n,L}^{(m)})=\sigma^2$ and get the DSM objective: 
\begin{equation*}
    \hat {\mathcal J}(\theta)=\frac{1}{M}\sum_{m=1}^M\sum_{n=1}^{N_m}\sum_{l=1}^{L}\frac{1}{2L}[\psi_{\theta}(\tilde t_{n,l}^{(m)}|\mathcal F_{t_{n-1}^{(m)}})+\frac{\varepsilon_{n,l}^{(m)}}{\sigma^2}]+\alpha\hat {\mathcal J}_{\text{CE}}(\theta),
\end{equation*}
where ${\mathcal J}_{\text{CE}}(\theta)$ is the cross-entropy loss defined in \cref{eq:CE_loss}.


To compare the performance of different methods, for parametric models on synthetic data, we use the mean absolute error (\textbf{MAE}, $|\hat{\theta}-\theta|$) between the ground-truth parameters and the estimates as a metric since the ground-truth parameters are known. For deep point process models, we use the test log-likelihood (\textbf{TLL}) and the event type prediction accuracy (\textbf{ACC}) on the test data as metrics.


\subsection{Parametric Models}

\label{synthetic}
\paragraph{Datasets} We validate the effectiveness of (A)WSM using three sets of synthetic data. 
(1) \textbf{Poisson Process}: This dataset is simulated from an inhomogeneous Poisson process with an intensity function $\lambda(t)=\exp(\theta \sin(t))$ with $T=2$, $\theta=2$. 
(2) \textbf{Exponential Hawkes Processes}: This dataset is simulated from $2$-variate Hawkes processes with exponential decay triggering kernels $g_{ij}(\tau) = \alpha_{ij} \exp(-5 \tau),\ \tau>0$ with $T=10$, $\mu_1=\mu_2=1$, $\alpha_{11}=1.6, \alpha_{12}=0.2$, $\alpha_{21}=\alpha_{22}=1$. 
(3) \textbf{Gaussian Hawkes Processes}: This dataset is simulated from $2$-variate Hawkes processes with Gaussian decay triggering kernels $g_{ij}(\tau) =\frac{\alpha_{ij}}{\sqrt{2\pi}\sigma} \exp(-\frac{\tau^2}{2\sigma^2}),\ \tau>0$ with $T=10$, $\mu_1=\mu_2=1$, $\alpha_{11}=1.6, \alpha_{12}=0.2$, $\alpha_{21}=\alpha_{22}=1$, $\sigma=1$.

\paragraph{Training Protocol}
We assume that we know the ground-truth model but do not know its parameters. Therefore, we use the ground-truth model as the training model. 
The purpose is to verify whether the estimator can recover the ground-truth parameters. 
For each dataset, we collect a total of $1000$ sequences. 
We 
run $500$ iterations of gradient descent using Adam~\cite{kingma2014adam} as the optimizer for all scenarios. 
For MLE, the intensity integral is computed through numerical integration, with the number of integration nodes set to $100$ to achieve a considerable level of accuracy. 
We change the random seed $3$ times to compute the mean and standard deviation of MAE.

\paragraph{Results}
In \cref{table: synthetic Data Experiment}, we report the MAE of parameter estimates for three models trained by MLE, (A)SM, and (A)WSM on the synthetic dataset. 
We can see that both MLE and (A)WSM achieve small MAE on three types of data. However, the MAE of (A)SM is large. As we have theoretically demonstrated earlier, this is because MLE and (A)WSM estimators are consistent. In contrast, (A)SM, due to the absence of the required regularity conditions in the three cases, has an incomplete objective and cannot accurately estimate parameters. 
In \cref{learned_intensity}, we showcase the learned intensity functions. Both MLE and (A)WSM successfully captured the ground truth, while (A)SM fails. 


\begin{table*}[t]
\centering
\caption{The MAE of three models trained by MLE, (A)SM, and (A)WSM on the synthetic dataset. 
For the 2-variate processes, we only present the estimation results for some parameters here. The results for other parameters can be found in \cref{table:additional_experiments}.}

\label{table: synthetic Data Experiment}
\begin{sc}
\scalebox{0.74}{
\begin{tabular}{c|c|ccc|ccc}
    \toprule
    \multirow{2}{*}{Estimator} & \multicolumn{1}{c|}{Poisson} & \multicolumn{3}{c|}{Exp-Hawkes }  & \multicolumn{3}{c}{Gaussian-Hawkes } \\
    \cmidrule{2-8}
        & $\phi$ & $\alpha_{11}$ & $\alpha_{12}$ &  $\mu_{1}$ & $\alpha_{11}$ & $\mu_{1}$ & $\sigma$ \\
    
    \midrule
    (A)WSM & $0.07_{\pm 0.14}$ & $0.041_{\pm 0.041}$& {$0.026_{\pm 0.001}$ }&$\bm{0.011_{\pm 0.010}}$ &$0.153_{\pm 0.162}$  & {$0.022_{\pm 0.023}$} &$0.060_{\pm 0.066}$ 
    \\
     \midrule
    (A)SM & $1.56_{\pm 0.01}$ & $1.600_{\pm 0.001}$& $0.200_{\pm 14.30}$ & $0.700_{\pm 0.272}$ &$1.413_{\pm 0.263}$  & $0.696_{\pm 0.267}$ & $2.507_{\pm 1.957}$ 
    \\
    \midrule
    MLE  & \bm{$-0.02_{\pm 0.10}$} & \bm{$0.028_{\pm 0.015}$} & $\bm{0.014_{\pm 0.002}}$ & {$0.012_{\pm 0.006}$} &$\bm{0.098_{\pm 0.107}}$ & $\bm{0.017_{\pm 0.019}}$ & \bm{$0.051{\pm 0.049}$}\\
    \bottomrule
\end{tabular}}
\end{sc}
\end{table*}

\subsection{Deep Point Processes Models}

\paragraph{Datasets}
We consider four real datasets. 
(1) \textbf{Half-Sin Hawkes Process}: This is a synthetic 2-variate Hawkes process with trigerring kernel $g_{ij}=\alpha_{ij}sin(\tau), \tau\in (0,\pi)$,   $K=2$.
(2) \textbf{StackOverflow}~\citep{jure2014snap}: This dataset has two years of user awards on StackOverflow. Each user received a sequence of badges 
and there are $K = 22$ kinds of badges. 
(3) \textbf{Retweet}~\citep{zhao2015seismic}: This dataset includes sequences indicating how each novel tweets are forwarded by other users. 
Retweeter categories serve as event types $K=3$. 
(4) \textbf{Taobao}~\citep{xue2022hypro}: This dataset comprises user activities on Taobao (in total $K=17$ event types). For each dataset, we follow the default training/dev/testing split in the repository. 

\paragraph{Training Protocol}
In recent years, many deep point process models have been proposed. Here, we focus on two of the most popular attention-based Hawkes process models: 
\textbf{SAHP}~\citep{zhang2020self} and \textbf{THP}~\citep{simiao2020transformer}. 
We deploy AWSM and ASM on THP and SAHP. 
For each dataset, we train 3 seeds with the same epochs and report the mean and standard deviation of the best TLL and ACC.
When using MLE, we adopt numerical integration to calculate the intensity integral. To ensure model accuracy, the number of integration nodes is set to be large enough as we sample 10 nodes between every two adjacent events. When using DSM, we tune the variance of noise for better results. When using AWSM, since for real datasets, the true observation endpoint $T$ is unknown. We choose the maximum event time of each batch as the observation endpoint for weight function $\bm h^0$. This may lead to unsatisfying results since real datasets may not be sampled during a unified time window. We provide a remedy for this as discussed in \cref{sec:seq_trunc}. Details of training and testing hyperparameters are provided in \cref{sec:hyperparameters}.

\paragraph{Results}
In \cref{table: Real Data Experiment}, we report the performance of SAHP and THP trained using three different methods, namely MLE, AWSM, and DSM, on four datasets. It is evident from the results that models trained with MLE and AWSM exhibit very similar performance in terms of both TLL and ACC on the test data. This indicates consistency between MLE and AWSM, as they yield comparable model parameters. For DSM, it is significantly inferior to the performance of MLE and AWSM. This may result from the fact that the DSM objective is a biased estimation of the original SM objective and fails to produce consistent estimation when $\sigma>0$ as discussed in \citep{vincent2011connection}.
For ASM, it completely fails in the scenarios mentioned above. It is unable to estimate the correct parameters, and its results are not reported. 
Generally, for complex point process models such as deep Hawkes processes, the necessary regularity conditions are not satisfied, meaning that 
ASM's objective is incomplete.


\begin{figure*}[t]
    \begin{center}
    \adjustbox{valign=b}{
    \begin{minipage}{0.23\linewidth}
    \includegraphics[width=\columnwidth]{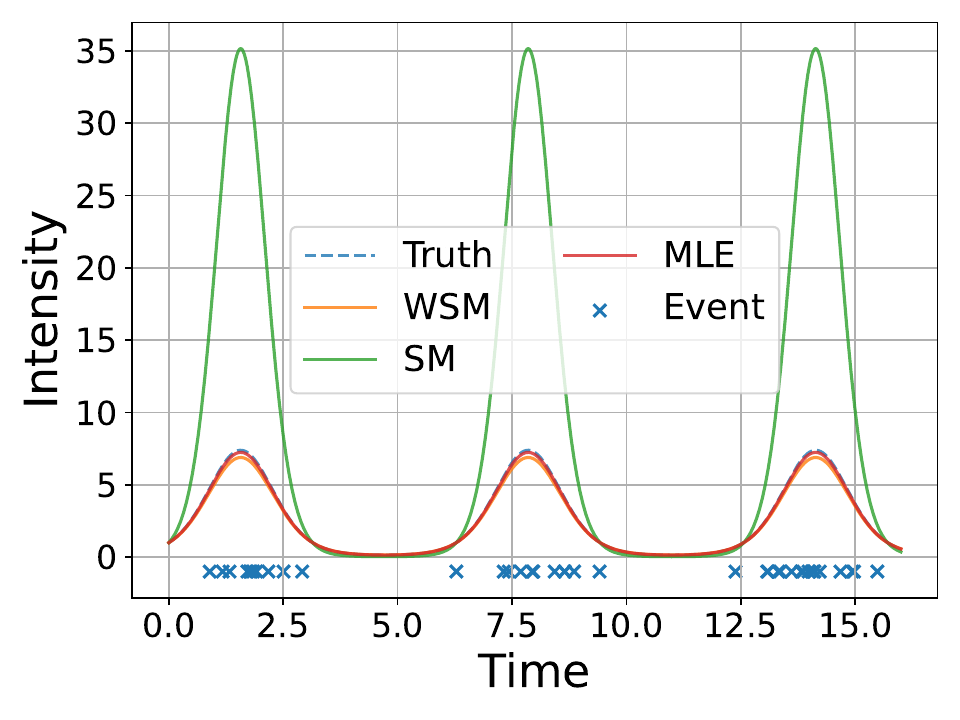}
    \subcaption{Poisson}
    \end{minipage}}
    \adjustbox{valign=b}{
    \begin{minipage}{0.23\linewidth}
    \includegraphics[width=\columnwidth]{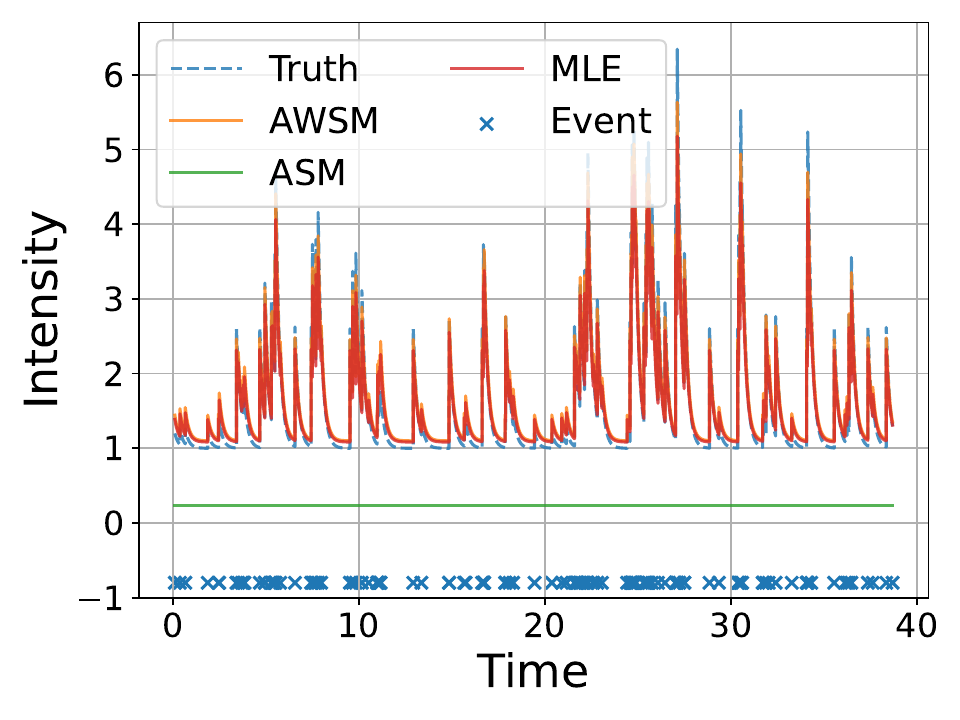}
    \subcaption{Exp-Hawkes Dim-1}
    \end{minipage}}
    \adjustbox{valign=b}{
    \begin{minipage}{0.23\linewidth}
    \includegraphics[width=\columnwidth]{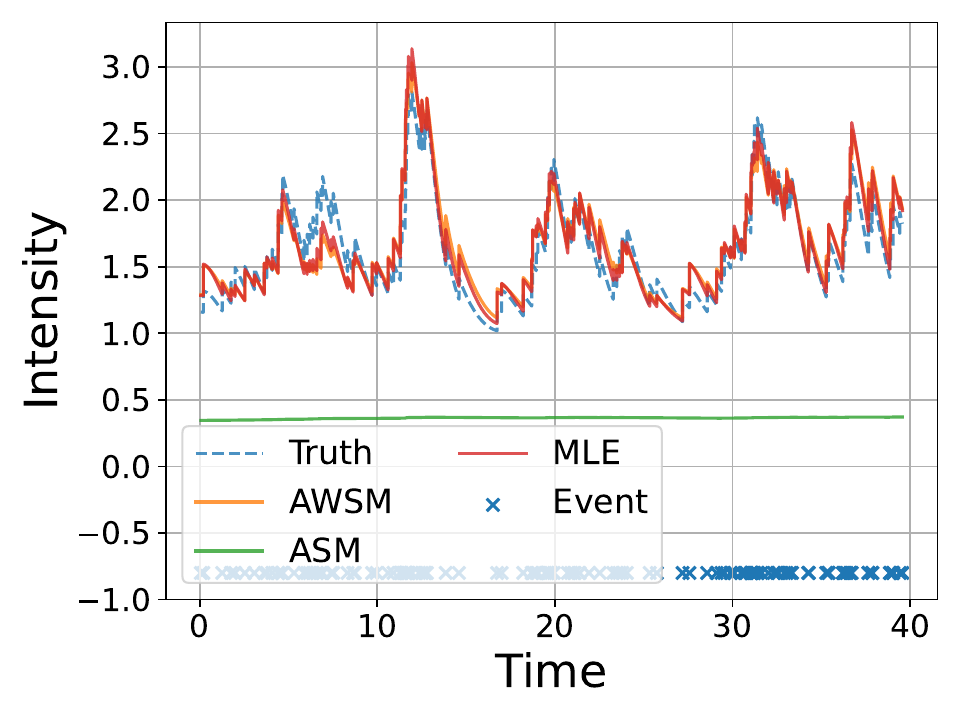}
    \subcaption{Gau-Hawkes Dim-1}
    \end{minipage}}
    \adjustbox{valign=b}{
    \begin{minipage}{0.23\linewidth}
    \label{fig:runtime}
    \includegraphics[width=\columnwidth]{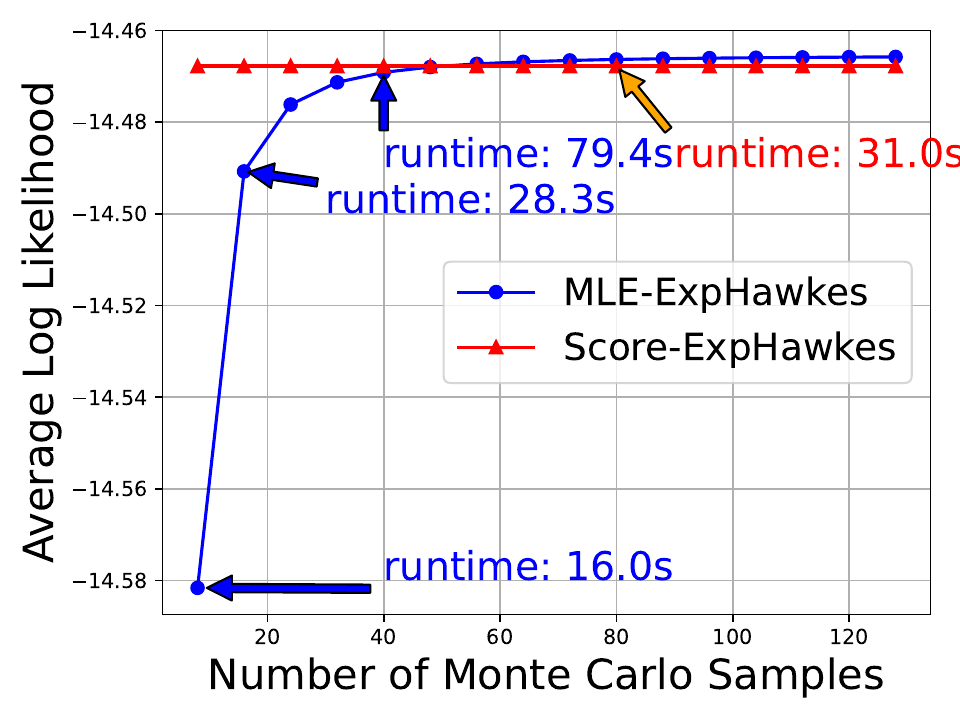}
    \subcaption{TLL and Runtime}
    \label{mle_wsm}
    \end{minipage}}
    \end{center}
    \caption{The learned intensity functions from MLE, (A)SM, and (A)WSM on (a) Poisson, (b) Exp-Hawkes and (c) Gaussian-Hawkes. 
    We present the results for the 1-st dimension. The 2-nd dimension are in \cref{sec:additional experiment}. 
    The ground truth, MLE, and (A)WSM nearly overlap, while (A)SM differs. 
    (d) The TLL and runtime of (A)WSM and MLE w.r.t. the number of integration nodes.}
    \label{learned_intensity}
\end{figure*}

\begin{table*}[t]
\centering
\caption{The TLL and ACC of two attention-based deep Hawkes process models trained by MLE and AWSM on four datasets.
Because ASM estimator completely fails, we do not report its results.}
\label{table: Real Data Experiment}

\begin{sc}
\scalebox{0.70}{
\begin{tabular}{c|ccc|ccc}
    \toprule
    \multirow{2}{*}{Dataset} & \multicolumn{3}{c}{SAHP (TLL$\uparrow$)}  & \multicolumn{3}{c}{THP (TLL$\uparrow$)} \\
    \cmidrule{2-7}
        & MLE & AWSM & DSM & MLE & AWSM & DSM \\
    \midrule
    Half-Sin & $1.542_{\pm 0.038}$& \bm{$1.703_{\pm 0.014}$}& $0.804_{\pm 0.353}$ & {$1.161_{\pm 0.031}$} & \bm{$1.271_{\pm 0.036}$}& $-0.385_{\pm 0.033}$\\
    \midrule
    Stackoverflow & \bm{$-2.428_{\pm 0.14}$} &{$-2.541_{\pm 0.461}$} & $-2.629
_{\pm 0.068}$&\bm{$-2.368_{\pm 0.003}$} & $-2.508_{\pm 0.007}$& $-2.782_{\pm 0.034}$\\
    \midrule
    Tabao & \bm{$-1.050_{\pm 0.100}$}& {$-1.373_{\pm 0.091}$} & $-1.911_{\pm 0.049}$  &$ {-1.052_{\pm 0.012}}$ &\bm {$-0.948_{\pm 0.004}$} & $-1.791_{\pm 0.040}$\\
    \midrule
    Retweets & \bm{$0.454_{\pm 0.009}$} & $0.411_{\pm 0.077}$&  $0.110_{\pm 0.186}$&\bm{$0.421_{\pm 0.012}$} & $0.419_{\pm 0.009}$ & $-0.183_{\pm 0.197}$ \\

    \bottomrule

    \toprule
    \multirow{2}{*}{Dataset} & \multicolumn{3}{c}{SAHP (ACC$\uparrow$)}  & \multicolumn{3}{c}{THP (ACC$\uparrow$)} \\
    \cmidrule{2-7}
        & MLE & AWSM & DSM & MLE & AWSM &DSM \\
    \midrule
    Half-Sin & ${0.502}_{\pm 0.001}$& \bm{$0.505_{\pm 0.001}$}& ${0.501}_{\pm 0.001}$ &${0.508_{\pm 0.016}}$& \bm{$0.523_{\pm 0.010}$} &$0.503_{\pm 0.001}$ \\
    \midrule
    Stackoverflow & $0.461_{\pm 0.001}$& $\bm{0.462_{\pm 0.01}}$& $0.421_{\pm 0.042}$ &{$0.461
_{\pm 0.001}$} & \bm{$0.462_{\pm 0.001}$} & $0.445_{\pm 0.016}$\\
    \midrule
    Tabao &\bm{$0.572_{\pm 0.022}$} & $0.455_{\pm 0.011}$& $0.421_{\pm 0.017}$&  \bm{$0.594_{\pm 0.001}$}& ${0.592_{\pm 0.002}}$& $0.435_{\pm 0.010}$   \\
    \midrule
    Retweets & \bm{$0.454_{\pm 0.009}$}& $\bm{0.411_{\pm 0.077}}$ & $0.590_{\pm 0.009}$ &$\bm{0.594_{\pm 0.001}}$& $0.592_{\pm 0.002}$ & $0.556_{\pm 0.011}$\\

    \bottomrule
\end{tabular}}
\end{sc}

\end{table*}

\subsection{Advantage of (A)WSM over MLE}

The key advantage of (A)WSM over MLE is its avoidance of computing intensity integrals, which can be computationally intensive for complex point process models and impact MLE accuracy. We evaluate the test log-likelihood of MLE and AWSM on the Exp-Hawkes dataset as the number of integration nodes varies. As shown in \cref{mle_wsm}, with a limited number of nodes, MLE is faster but exhibits substantial estimation errors. Increasing the number of nodes reduces the error but significantly increases computation time. In this scenario, AWSM is much faster than MLE with the same accuracy, thus offering better computational efficiency. 
\subsection{Comparison Between Weights}
Though we provide theoretical insight into the choice of an optimal weight function for AWSM, its validity still needs to be testified by experiments. Here, we compare the near-optimal weight $\bm h^0$ with natural weight $\bm h^1$ and squareroot weight $\bm h^2$ satisfying \cref{eq:awsm_require2},
\begin{equation*}
        h_n^1(t_n)=(t_n-t_{n-1})(T-t_{n}),
        h_n^2(t_n)=\sqrt{(t_n-t_{n-1})(T-t_n)}.
\end{equation*}
All three weight functions can be applied in AWSM to recover ground-truth parameters, however with different convergence rates. We carry out experiments on synthetic data for exponential-decay model with the same setting as \cref{synthetic} in our paper. We measure their MAE for different sample sizes in \cref{fig:compare_weight} and find that $\bf h^0$ does achieve the best results among the three weight functions. 

\begin{figure*}[t]
    \begin{center}
    \adjustbox{valign=b}{
    \begin{minipage}{0.31\linewidth}
    \includegraphics[width=\columnwidth]{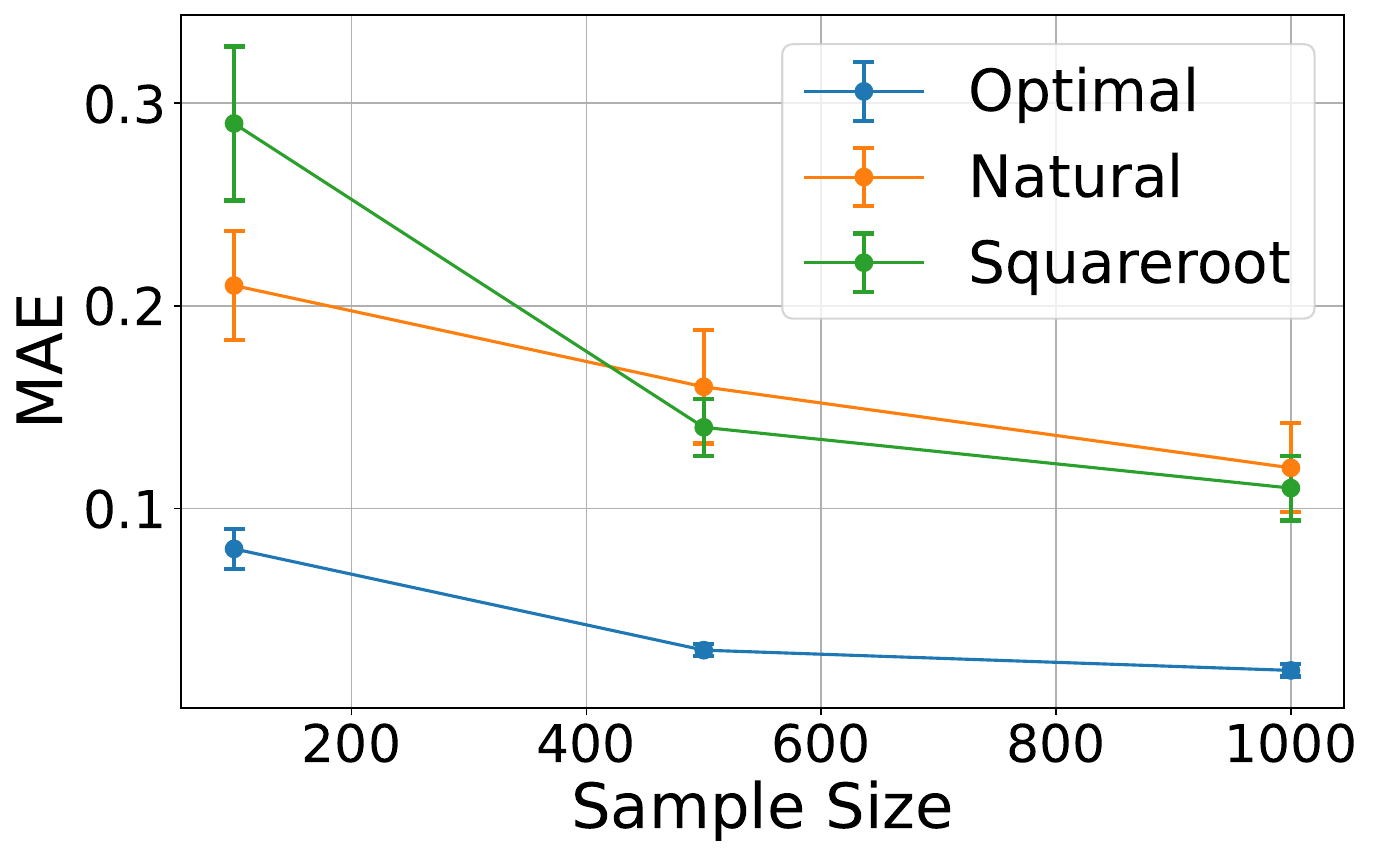}
    \subcaption{On parameter $\alpha_{11}$}
    \end{minipage}}
    \adjustbox{valign=b}{
    \begin{minipage}{0.31\linewidth}
    \includegraphics[width=\columnwidth]{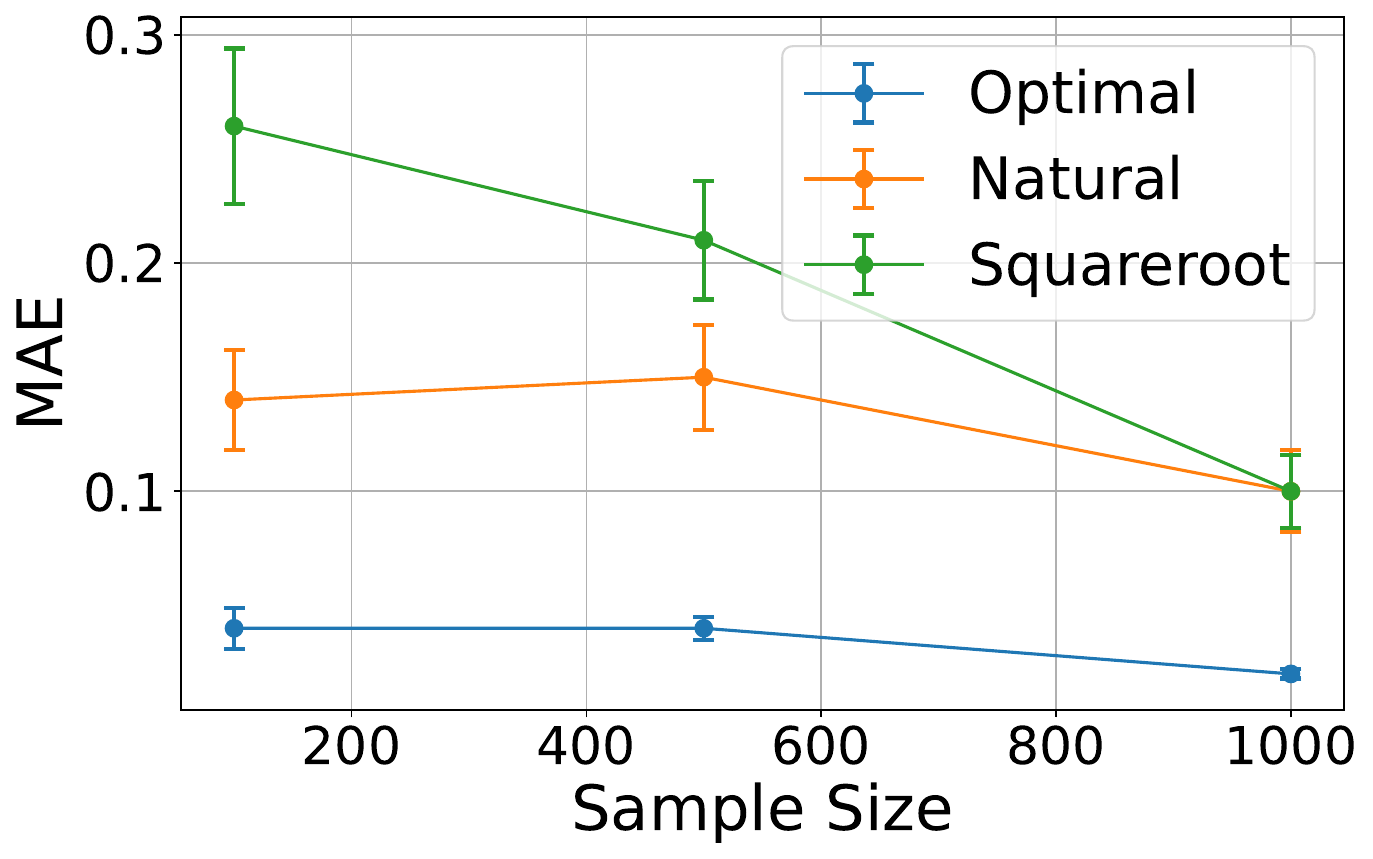}
    \subcaption{On parameter $\alpha_{21}$}
    \end{minipage}}
    \adjustbox{valign=b}{
    \begin{minipage}{0.31\linewidth}
    \includegraphics[width=\columnwidth]{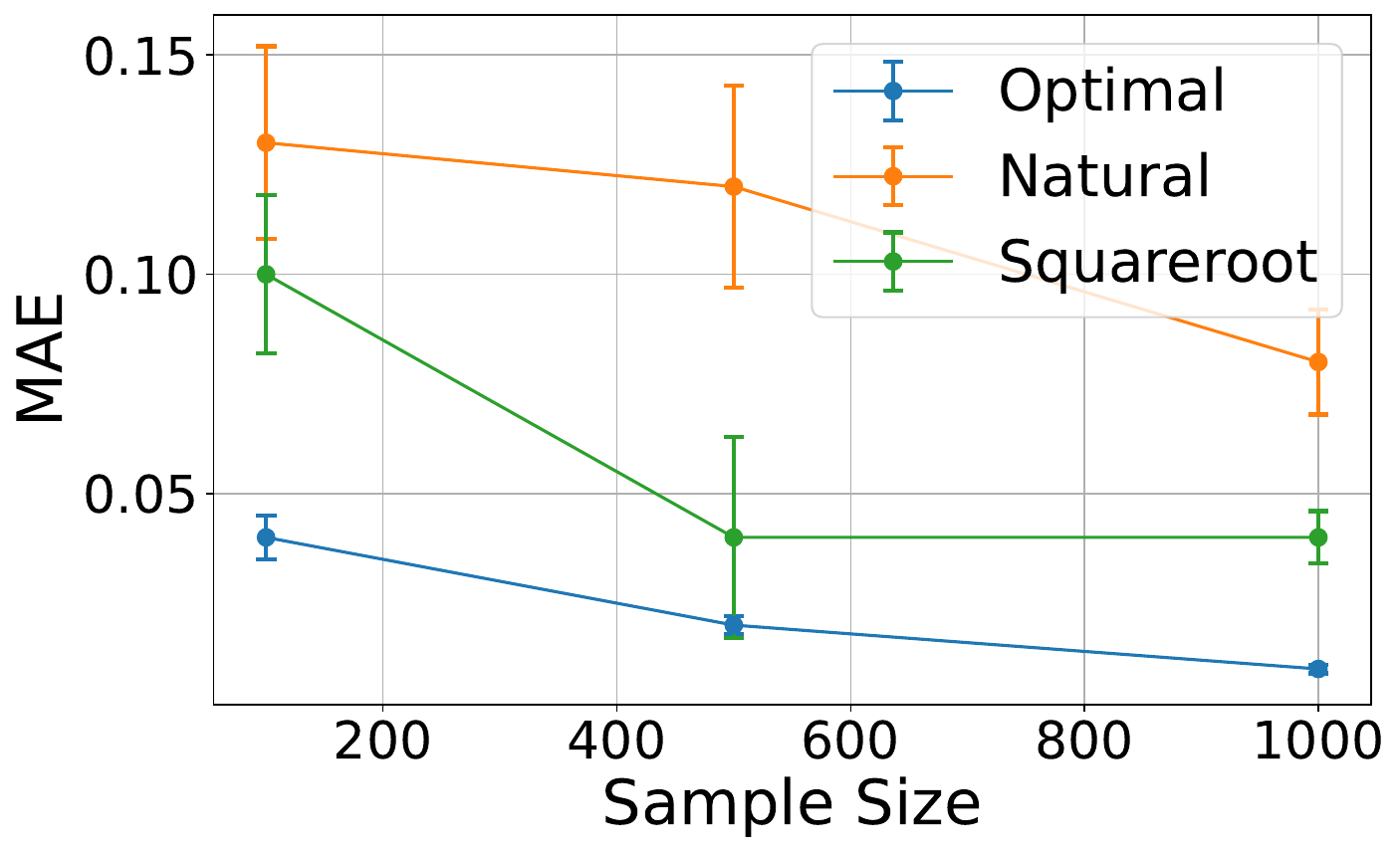}
    \subcaption{On parameter $\mu_1$}
    \end{minipage}}
    \end{center}
    \caption{MAE of parameter estimation versus sample size for three different weight functions on Exponential-Hawkes Model. Our near-optimal weight function outperforms the rest two valid weight functions in all sample sizes. We only show results for three parameters. The rest parameters have almost the same paradigm.}
    \label{fig:compare_weight}
\end{figure*}

\section{Limitations}
\label{limitation}
The current limitation of the methodology is that some real data are collected from multiple time intervals $[0,T_1],\ldots, [0, T_L]$ or collated in a fixed time interval $[0, T]$ with unknown $T$. However, for a score matching to be valid, the required weight function must involve knowledge of $T$. Currently, our remedy including approximate $T$ or performing data truncation as discussed in \cref{sec:seq_trunc}.


\section{Conclusions}
\label{conclusion}
In conclusion, the SM estimator for point processes can overcome the challenges associated with intensity integrals in MLE. While existing works have proposed SM estimators for point processes, our investigation reveals that they prove effective only for specific problems and fall short in more general cases. 
To address this issue, our work introduces a novel approach: the (A)WSM estimator for point processes, offering both theoretical soundness and empirical success.

\begin{ack}
This work was supported by NSFC Project (No. 62106121), the MOE Project of Key Research Institute of Humanities and Social Sciences (22JJD110001), the fundamental research funds for the central universities, and the research funds of Renmin University of China (24XNKJ13). 
\end{ack}


\bibliographystyle{plain}
\bibliography{neurips_2024}

\clearpage
\appendix

\addcontentsline{toc}{section}{Appendices}
\theoremstyle{plain}
\newtheorem*{assumption1}{Assumption \ref{assump_seperate}}
\newtheorem*{assumption2}{Assumption \ref{assump_continuity}}
\section{Proof of Results in \texorpdfstring{\cref{method}}{Section 3}}

\subsection{Proof of \texorpdfstring{\cref{prop:ESM and ISM for Poisson Process}}{Proof of Proposition 3.1}}\label{seq:proof of prop ESM and ISM for Poisson Process}

\begin{proof}
    
First consider the cross term in $\mathcal L_{\text{SM}}(\theta)$ and expand the expectation,
\begin{equation}\label{eq:Expand Cross Term In SM}
\begin{aligned}
       &\mathbb E_{p(\mathcal T)}\left[\sum_{n=1}^{N_T}\frac{\partial \log p(\mathcal T)}{\partial t_n}\frac{\partial \log p_{\theta}(\mathcal T)}{\partial t_n}\right]\\
       =&\sum_{N=1}^{\infty}\int p(t_1,\ldots, t_N)\sum_{n=1}^N\frac{\partial \log p(t_1,\ldots, t_N)}{\partial t_n}\frac{\partial \log p_{\theta}(t_1,\ldots, t_N)}{\partial t_n}d\mathcal T_{1:N} \\
       =&\sum_{N=1}^{\infty} \int \sum_{n=1}^N \frac{\partial p(t_1,\ldots, t_N)}{\partial t_n}\frac{\partial \log p_{\theta}(t_1,\ldots, t_N)}{\partial t_n}d\mathcal T_{1:N}\\
       =&\sum_{N=1}^\infty \left \{\int \sum_{n=1}^N [p(t_1,\ldots, t_N)\frac{\partial \log p_{\theta}(t_1,\ldots, t_N)}{\partial t_n}]\big|_{t_n=t_{n-1}}^{t_n=t_{n+1}}d\mathcal T_{-n} \right.\\ &\left.-\int\sum_{n=1}^N p(t_1,\ldots, t_N)\frac{\partial^2 \log p_{\theta}(t_1,\ldots, t_N)}{\partial t_n^2}d\mathcal T_{1:N} \right\}. 
\end{aligned}
\end{equation}
The third equation uses an integral-by-part trick. All integrations above are taken within the area of $\{0\leq t_1 \leq \ldots \leq t_N\leq T\}$ or $\{0\leq t_1\leq \ldots \leq t_{n-1}\leq t_{n+1}\leq \ldots \leq t_N\leq T\}$ when $t_n$ has been integrated. 
For the first term in the right side of the last equation, notice that,
\begin{equation}\label{eq:continuity of score function for poisson process}
    \frac{\partial \log p_{\theta}(t_1,\ldots, t_N)}{\partial t_N} = \frac{\partial}{\partial t_n}\log\lambda_{\theta}(t_n),\ \ \ \  \frac{\partial }{\partial t_n}\log\lambda_{\theta}(t_n)\big|_{t_n=t_{n-1}}=\frac{\partial}{\partial t_{n-1}}\log\lambda_{\theta}(t_{n-1})|_{t_{n-1}=t_n}. 
\end{equation}

Therefore, we can see that, for $n\in \{2,\ldots, n\}$, 
\begin{equation*}
\begin{aligned}
    &\int [p(t_1,\ldots, t_N)\frac{\partial \log p_{\theta}(t_1,\ldots, t_N)}{\partial t_{n-1}}]\big|_{t_{n-1}=t_n}d\mathcal T_{-(n-1)} \\
    =& \int [p(t_1,\ldots, t_N)\frac{\partial \log p_{\theta}(t_1,\ldots, t_N)}{\partial t_n}]\big|_{t_n=t_{n-1}}d\mathcal T_{-n}. 
\end{aligned} 
\end{equation*}
Using the above equation, we manage to cancel out most of the terms being summed in the right side of the thrid equation in \cref{eq:Expand Cross Term In SM} and only leave the first and last term, which completes the proof.

\end{proof}

\subsection{Proof of  \texorpdfstring{\cref{thm:validity of WSM objective}}{Theorem 3.2}}
\begin{proof}
    First, since $\mathcal L_{\text{WSM}}(\theta) \geq 0$ and $\mathcal L_{\text{WSM}}(\theta^*) = 0$, we see $\theta^*$ is a minimizer. If there exists another minimizer $\theta_1$, then we have
\begin{equation*}
    \begin{aligned}
        \mathcal L_{\text{WSM}}(\theta) = \sum_{N=1}^\infty \int \sum_{n=1}^N \left[\frac{\partial}{\partial t_n}\log\lambda_{\theta^*}(t_n) - \frac{\partial}{\partial t_n} \log \lambda_{\theta_1}(t_n)\right]^2h_n(\mathcal T)d\mathcal T. 
    \end{aligned}
\end{equation*}

By the definition of $\mathbf{h}(\mathcal T)$, for any $N$, since $\mathbf{h}(\mathcal T) > 0\  a.s.$ elementwisely on $\{0\leq t_1\leq \ldots \leq t_N\leq T\}$. So $\mathcal L_{\text{WSM}}(\theta) = 0$ implies $\frac{\partial \log\lambda_{\theta^*}(t)}{\partial t}$ = $\frac{\partial \log\lambda_{\theta_1}(t)}{\partial t}\ a.e.$ on $[0, T]$. By assumption, this implies $\theta_1 = \theta^*$, which completes the proof.
\end{proof}

\subsection{Proof of \texorpdfstring{\cref{thm:equivalence between EWSM and IWSM}}{Theorem 3.3}}
\begin{proof}
    The proof is basically the same as the proof of \cref{prop:ESM and ISM for Poisson Process}. We first expand the expectation and consider the cross-term in $\mathcal L_{\text{WSM}}(\theta)$,
    \begin{equation}\label{eq:expand the cross term in EWSM}
        \begin{aligned}
            & \mathbb E_{p(\mathcal T)} \left[\sum_{n=1}^{N_T} \frac{\partial \log p(\mathcal T)}{\partial t_n}\frac{\partial \log p_{\theta}(\mathcal T)}{\partial t_n}h_n(\mathcal T)\right]=\sum_{N=1}^\infty \int \sum_{n=1}^N\frac{\partial p(\mathcal T)}{\partial t_n}\frac{\partial \log p_{\theta}(\mathcal T)}{\partial t_n}h_n(\mathcal T)d\mathcal T_{1:N}\\
            &=\sum_{N=1}^{\infty} \left\{\int \sum_{n=1}^N p(t_1,\ldots, t_N)\frac{\partial \log p_{\theta}(\mathcal T)}{\partial t_n}h_n(\mathcal T)\Big|_{t_n=t_{n-1}}^{t_n=t_{n+1}}d\mathcal T_{-n}\right.\\
            &\left.-\int p(t_1,\ldots, t_N)\sum_{n=1}^N\left[\frac{\partial^2 \log p_{\theta}(\mathcal T)}{\partial t_n^2}h_n(\mathcal T)+\frac{\partial \log p_{\theta}(\mathcal T)}{\partial t_n}\frac{\partial h_n(\mathcal T)}{\partial t_n}\right]d\mathcal T_{1:N}\right\}. 
        \end{aligned}
    \end{equation}   
We denote $t_{N+1}=T$ and $t_0=0$ here. Using the first two equations in \cref{eq:wsm_condition}, we have: 
\begin{equation*}
\begin{aligned}
         &\int [p(t_1,\ldots, t_N)\frac{\partial \log p_{\theta}(t_1,\ldots, t_N)}{\partial t_{n}}h_n(\mathcal T)]\big|_{t_{n}=t_{n+1}}d\mathcal T_{-n}=0, \forall n\in [N],\\
&\int [p(t_1,\ldots, t_N)\frac{\partial \log p_{\theta}(t_1,\ldots, t_N)}{\partial t_n}h_n(\mathcal T)]\big|_{t_n=t_{n-1}}d\mathcal T_{-n}=0,\forall n\in [N].
\end{aligned}
\end{equation*}
Therefore, the first intractable summation term in \cref{eq:expand the cross term in EWSM} will disappear, and the second term equals $-\mathbb{E}_{p(\mathcal{T})}\left[\sum_{n=1}^{N_T} \frac{\partial}{\partial t_n}\psi_{\theta}(t_n)h_n(\mathcal{T}) + \psi_{\theta}(t_n)\frac{\partial}{\partial t_n}h_n(\mathcal{T})\right]$. The existence of such an expectation is due to the last two equations in \cref{eq:wsm_condition}. Therefore, we complete the proof.



\end{proof}

We can see from the proof that, in \cref{eq:wsm_condition}, the first two equations ensure that the integration by parts trick does not produce an intractable term, and the last two equations are simply regularity conditions that ensure all terms are well-defined.

\section{Proof of Results in \texorpdfstring{\cref{method2}}{Section 4}}
\begin{lemma}
    Let $f(t_n, \mathcal F_{t_{n-1}})$ be a function of $t_n, \mathcal F_{t_{n-1}}$, where $n\in \{1,\ldots, N_T\}$. Then we have 
    \begin{equation*}
        \mathbb E_{p(\mathcal T)} \left[\sum_{n=1}^{N_T}f(t_n, \mathcal F_{t_{n-1}}) \right]= \sum_{n=1}^\infty \int p(t_1,\ldots, t_n) f(t_n, \mathcal F_{t_{n-1}})d\mathcal T_{1:n}, 
    \end{equation*}
    where $p(t_1,\ldots, t_n)$ is the density of observing these timestamps, the integration is taken over $\{0\leq t_1\leq \ldots \leq t_n\leq T\}$. 
\label{Lemma:exchange summation}
\end{lemma}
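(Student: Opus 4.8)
The plan is to evaluate the left-hand expectation by conditioning on the total number of events and then reorganizing the resulting double sum. First I would unfold the expectation over $p(\mathcal T)$ as a sum over the event count $N$ together with an integral over the ordered simplex, keeping the inner sum intact:
\[
\mathbb E_{p(\mathcal T)}\Big[\sum_{n=1}^{N_T} f(t_n, \mathcal F_{t_{n-1}})\Big] = \sum_{N=0}^{\infty} \int_{0\le t_1\le\cdots\le t_N\le T} p(t_1,\ldots,t_N)\,\Big[\sum_{n=1}^{N} f(t_n,\mathcal F_{t_{n-1}})\Big]\,d\mathcal T_{1:N},
\]
where $p(t_1,\ldots,t_N)$ denotes the density of a complete trajectory having exactly $N$ events in $[0,T]$.

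Second, I would write the inner finite sum as an infinite sum with an indicator, $\sum_{n=1}^{N} = \sum_{n=1}^{\infty}\mathbb 1\{n\le N\}$, and interchange the order of the two summations and the integral. This interchange is licensed by Fubini–Tonelli under the standing assumption that the expectation is well defined (absolutely convergent), and it recasts the quantity as $\sum_{n=1}^{\infty}\sum_{N\ge n}\int_{0\le t_1\le\cdots\le t_N\le T} p(t_1,\ldots,t_N)\,f(t_n,\mathcal F_{t_{n-1}})\,d\mathcal T_{1:N}$.

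The crucial step is marginalizing out the future. Because $f(t_n,\mathcal F_{t_{n-1}})$ depends only on $t_1,\ldots,t_n$, for each fixed $n$ I would integrate out $t_{n+1},\ldots,t_N$ over $\{t_n\le t_{n+1}\le\cdots\le t_N\le T\}$ and sum over all $N\ge n$. The key identity to establish is
\[
\sum_{N\ge n}\int_{t_n\le t_{n+1}\le\cdots\le t_N\le T} p(t_1,\ldots,t_N)\,dt_{n+1}\cdots dt_N = p(t_1,\ldots,t_n),
\]
i.e. the marginal density of the first $n$ event times. To prove it I would use the autoregressive factorization $p(t_1,\ldots,t_N)=\big[\prod_{i=1}^{N}p(t_i\mid\mathcal F_{t_{i-1}})\big]\exp\!\big(-\int_{t_N}^{T}\lambda^*(\tau)\,d\tau\big)$, pull out the factor $\prod_{i=1}^{n}p(t_i\mid\mathcal F_{t_{i-1}})=p(t_1,\ldots,t_n)$, and recognize the leftover sum-integral as the total mass of the conditional law of the process on $(t_n,T]$ given $\mathcal F_{t_n}$, which equals $1$ by normalization of the point-process density. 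Substituting this identity back yields exactly the right-hand side of the claim.

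The main obstacle is this marginalization identity: making precise that summing over the number of subsequent events and integrating out their times recovers the marginal density, which is precisely the normalization property of the point-process likelihood restricted to $(t_n,T]$. Once that is in hand, the remainder is the Fubini interchange and elementary bookkeeping.
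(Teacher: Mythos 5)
Your proposal is correct and follows essentially the same route as the paper's proof: expanding the expectation over the event count $N$, factorizing the trajectory density through the conditionals, applying Fubini to swap the two summations, and using the normalization of the conditional law of the future (the paper phrases this as $\sum_{N\ge n} p(N_T = N \mid \mathcal F_{t_n}) = 1$, which is exactly your marginalization identity split into two steps). The only difference is bookkeeping order — you apply Fubini before marginalizing out the future, while the paper marginalizes within each fixed-$N$ term first — which is immaterial.
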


\begin{proof}
    We first expand the expectation and obtain,
    \begin{equation*}
    \begin{aligned}
        &\mathbb E_{p(\mathcal T)} \left[\sum_{n=1}^{N_T}f(t_n, \mathcal F_{t_{n-1}}) \right] = \sum_{N=1}^\infty \int p(t_1,\ldots, t_N) \sum_{n=1}^{N} f(t_n, \mathcal F_{t_{n-1}})d\mathcal T\\
        =&\sum_{N=1}^\infty \Big\{\int \int \sum_{n=1}^{N-1} p(t_1,\ldots, t_n)p(t_{n+1},\ldots, t_N|\mathcal F_{t_n})f(t_n,\mathcal F_{t_{n-1}})d\mathcal T_{n+1:N}d\mathcal T_{1:n} \\
        &+\int p(t_1,\ldots, t_N)p(N_T=N|\mathcal F_{t_N})f(t_N,\mathcal F_{t_{N-1}})d\mathcal T \Big\}. 
    \end{aligned}
    \end{equation*}
At this point, we need to first integrate out $t_{n+1},\ldots, t_N$, which uses 
\begin{equation*}
\int p(t_{n+1},\ldots, t_N|\mathcal F_{t_{n}})d\mathcal T_{n+1:N} = p(N_T = N|\mathcal F_{t_n}).
\end{equation*}
Plug this in and we obtain,
\begin{equation*}
\begin{aligned}
    \mathbb E_{p(\mathcal T)} \left[\sum_{n=1}^{N_T}f(t_n, \mathcal F_{t_{n-1}}) \right] &=\sum_{N=1}^{\infty} \int \sum_{n=1}^N p(t_1,\ldots,t_n)p(N_T=N|\mathcal F_{t_n})f(t_n, \mathcal F_{t_{n-1}})d\mathcal T_{1:n}\\
    &= \sum_{N=1}^\infty \sum_{n=1}^N \int p(t_1,\ldots, t_n)p(N_T=N|\mathcal F_{t_n})f(t_n, \mathcal F_{t_{n-1}})d\mathcal T_{1:n}\\
    &= \sum_{n=1}^\infty \int p(t_1,\ldots, t_n)f(t_n, \mathcal F_{t_{n-1}})\sum_{N=n}^\infty p(N_T=N|\mathcal F_{t_n})d\mathcal T_{1:n}\\
    &=\sum_{n=1}^\infty \int p(t_1,\ldots, t_n)f(t_n,\mathcal F_{t_{n-1}})d\mathcal T_{1:n}. 
\end{aligned}
\end{equation*}
The thrid equation adopts the exchange of summation. The feasibility is ensured by the assumption that the expectation in the left side of the equation exists and Fubini's theorem. The fourth equation use the fact that $\sum_{N=n}^\infty p(N_T=N|\mathcal F_{t_{n}}) = 1$.
    
\end{proof}

\subsection{Proof of \texorpdfstring{\cref{prop:equivalency}}{Propositioin 4.1}}
\begin{proof}
We use \cref{Lemma:exchange summation} to the cross term of $\mathcal L_{\text{ASM}}(\theta)$ and obtain,
\begin{equation*}
\begin{aligned}
    &\mathbb E_{p(\mathcal T)} \left[\sum_{n=1}^{N_T}\psi(t_n|\mathcal F_{t_{n-1}})\psi_{\theta}(t_n|\mathcal F_{t_{n-1}})\right]\\
    &=\sum_{n=1}^\infty \int p(t_1,\ldots t_n)\psi(t_n|\mathcal F_{t_{n-1}})\psi_{\theta}(t_n|\mathcal F_{t_{n-1}})d\mathcal T_{1:n}\\
    &= \int_{t_0}^{T} p(t_1|\mathcal F_{t_0})\frac{\partial \log p(t_1|\mathcal F_{t_0})}{\partial t_1}\psi_{\theta}(t_1|\mathcal F_{t_0})dt_1\\
    &+\sum_{n=2}^{\infty}\int p(t_1,\ldots, t_{n-1})p(t_n|\mathcal F_{t_{n-1}})\frac{\partial \log p(t_n|\mathcal F_{t_{n-1}})}{\partial t_n}\psi_{\theta}(t_n|\mathcal F_{t_{n-1}})d\mathcal T_{1:n}\\
    &=\int_{t_0}^T \frac{\partial p(t_1|\mathcal F_{t_0})}{\partial t_1}\psi_{\theta}(t_1|\mathcal F_{t_0})dt_1+\sum_{n=2}^\infty \int\int_{t_{n-1}}^T p(t_1,\ldots, t_{n-1})\frac{\partial p(t_n|\mathcal F_{t_{n-1}})}{\partial t_n}\psi_{\theta}(t_n|\mathcal F_{t_{n-1}})dt_nd{\mathcal T}_{1:n-1}\\
    &=p(t_1|\mathcal F_{t_{0}})\psi_{\theta}(t_1|\mathcal F_{t_0})\big|_{t_1=t_0}^{t_1=T} - \int_{t_0}^T p(t_1|\mathcal F_{t_{0}})\frac{\partial \psi_{\theta}(t_1|\mathcal F_{t_0})}{\partial t_1}dt_1 \\&+ \sum_{n=2}^\infty \int p(t_1,\ldots, t_{n-1})p(t_n|\mathcal F_{t_{n-1}})\psi_{\theta}(t_n|\mathcal F_{t_{n-1}})\big|_{t_n=t_{n-1}}^{t_n=T}d\mathcal T_{1:n-1} \\
    &- \sum_{n=2}^\infty \int p(t_1,\ldots, t_n)\frac{\partial \psi_{\theta}(t_n|\mathcal F_{t_{n-1}})}{\partial t_n}d\mathcal T_{1:n}\\
    &=\sum_{n=1}^{\infty} \int p(\mathcal T_{:n-1})p(t_n|\mathcal F_{t_{n-1}})\psi_{\theta}(t_n|\mathcal F_{t_{n-1}})\Big|_{t_n=t_{n-1}}^{t_n=T}d\mathcal{T}_{:{n-1}}-\sum_{n=1}^\infty \int p(t_1,\ldots, t_n)\frac{\partial \psi_{\theta}(t_n|\mathcal F_{t_{n-1}})}{\partial t_n}d\mathcal T_{1:n}\\
    &=\sum_{n=1}^{\infty} \int p(\mathcal T_{:n-1})p(t_n|\mathcal F_{t_{n-1}})\psi_{\theta}(t_n|\mathcal F_{t_{n-1}})\Big|_{t_n=t_{n-1}}^{t_n=T}d\mathcal{T}_{:{n-1}}-\mathbb E_{p(\mathcal T)}\left[\sum_{n=1}^{N_T} \frac{\partial \psi_{\theta}(t_n|\mathcal F_{t_{n-1}})}{\partial t_n} \right]. 
\end{aligned}
\end{equation*}
For the fifth equation, we simply rearrange the terms. We recall that the notation $p(\mathcal T_{:0})$ equals one. For the last equation, we use \cref{Lemma:exchange summation} again. This will be sufficient to complete the proof.
\end{proof}

\subsection{Proof of \texorpdfstring{\cref{theorem:unique minimizer}}{Theorem 4.2}}
\begin{proof}
     First, since $\mathcal L_{\text{AWSM}}(\theta) \geq 0$ and $\mathcal L_{\text{AWSM}}(\theta^*) = 0$, we see $\theta^*$ is a minimizer. If there exists another minimizer $\theta_1$, then we have 
    \begin{equation*}
        \begin{aligned}
            \mathcal L_{\text{AWSM}}(\theta_1) = \frac{1}{2}\sum_{N=1}^\infty \int \sum_{n=1}^{N} [\psi_{\theta^*}(t_n|\mathcal F_{t_{n-1}})-\psi_{\theta_1}(t_n|\mathcal F_{t_{n-1}})]^2h_n(\mathcal T)d\mathcal T = 0. 
        \end{aligned}
    \end{equation*}
    By the definition of $h_n(\mathcal T)$, we have $h_n(\mathcal T) > 0,\ a.s.$ on $0\leq t_1 \leq \ldots, \leq t_N\leq T$ for any $n \leq N$ and $N \in \mathbb N_+$. This implies, for any $n \leq N$ and $N \in \mathbb N_+$, $\psi_{\theta^*}(t_n|\mathcal F_{t_{n-1}}) = \psi_{\theta_1}(t_n|\mathcal F_{t_{n-1}}),\ a.e.$ on $\{0\leq t_{n-1}\leq t_n\leq T\}$.
    Therefore we have 
    \begin{equation*}
        \log p_{\theta^*}(t_n|\mathcal F_{t_{n-1}})=\log p_{\theta_1}(t_n|\mathcal F_{t_{n-1}})+C, a.e.,
    \end{equation*}
    And we conclude that $C = 0$ since $\int_{t_{n-1}}^T p_{\theta}(t_n|\mathcal F_{t_{n-1}})d t_n = 1$. Then by assumption, we have $p_{\theta^*}(t_n|\mathcal F_{t_{n-1}})=p_{\theta_1}(t_n|\mathcal F_{t_{n-1}})\ a.e. \Rightarrow \theta_1=\theta^*$. 
\end{proof}

\subsection{Proof of \texorpdfstring{\cref{thm:tractable EWSM}}{Theorem 4.3}}
The proof is basically the same as the proof of \cref{prop:equivalency}. We first use the \cref{Lemma:exchange summation} to the cross term of $\mathcal L_{\text{AWSM}}(\theta)$, 

\begin{equation*}
    \begin{aligned}
        &\mathbb E_{p(\mathcal T)}\left[\sum_{n=1}^{N_T} \psi(t_n|\mathcal F_{t_{n-1}})\psi_{\theta}(t_n|\mathcal F_{t_{n-1}})h_n(\mathcal T)\right]\\
  &=\sum_{n=1}^\infty \int p(t_1,\ldots t_n)\psi(t_n|\mathcal F_{t_{n-1}})\psi_{\theta}(t_n|\mathcal F_{t_{n-1}})h_n(\mathcal T)d\mathcal T_{1:n}\\
  &=\sum_{n=1}^{\infty} \int p(\mathcal T_{:n-1})p(t_n|\mathcal F_{t_{n-1}})\psi_{\theta}(t_n|\mathcal F_{t_{n-1}})h_n(\mathcal T)\Big|_{t_n=t_{n-1}}^{t_n=T}d\mathcal{T}_{:{n-1}}\\&-\sum_{n=1}^\infty \int p(t_1,\ldots, t_n)\left[\frac{\partial \psi_{\theta}(t_n|\mathcal F_{t_{n-1}})}{\partial t_n}h_n(\mathcal T) + \psi_{\theta}(t_n|\mathcal F_{t_{n-1}})\frac{\partial h_n(\mathcal T)}{\partial t_n}\right]d\mathcal T_{1:n}. 
    \end{aligned}
\end{equation*}

Between the second and the third line above, we omit the steps used in the derivation of Proposition 4.1 to make it concise. For the term in the third line above, it will be eliminated using \cref{eq:awsm_require2}. For the term in the fourth line above, using Lemma B.1, we have: 
\begin{equation*}
    \begin{aligned}
         &-\sum_{n=1}^\infty \int p(t_1,\ldots, t_n)\left[\frac{\partial \psi_{\theta}(t_n|\mathcal F_{t_{n-1}})}{\partial t_n}h_n(\mathcal T) + \psi_{\theta}(t_n|\mathcal F_{t_{n-1}})\frac{\partial h_n(\mathcal T)}{\partial t_n}\right]d\mathcal T_{1:n}=\\
&-\mathbb E_{p(\mathcal T)}\left[\sum_{n=1}^{N_T}\frac{\partial \psi_{\theta}(t_n|\mathcal F_{t_{n-1}})}{\partial t_n}h_n(\mathcal T) + \psi_{\theta}(t_n|\mathcal F_{t_{n-1}})\frac{\partial h_n(\mathcal T)}{\partial t_n}\right].
\end{aligned}
\end{equation*}
The existence of the expectation is ensured by the last two terms in \cref{eq:awsm_require2}.

\section{Proof of Results in \texorpdfstring{\cref{sec:Theory}}{Section 5}}

We present all the regularity condition needed for establishing the consistency of our estimator:

\begin{assumption}
\label{assu:poisson_compact}
    The parameter space $\Theta$ is a compact set in $\mathbb R^d$ and contains an open set which contains $\theta^*$.
\end{assumption}
\begin{assumption}\label{assu:hawkes_differentiable}
    Both the $\mu_{\theta}(t)$ and $g_{\theta}(t)$ are twice continuously differentiable w.r.t. $t$  for all $\theta \in \Theta$ and  those derivatives are continuous w.r.t. $\theta$.
\end{assumption}

We remind readers that $\mu_{\theta}(t)$ and $g_{\theta}(t)$ are the mean intensity function and the triggering kernel for a Hawkes process, first defined in \cref{eq:Hawkes intensity}.

\begin{assumption}\label{assu:hawkes_identifiable}
    If both $\mu_{\theta_1}(t) = \mu_{\theta_2}(t)\ a.s. $  and  $g_{\theta_1}(t) = g_{\theta_2}(t)\ a.s. $ on $[0, T]$, then $\theta_1 = \theta_2$. 
\end{assumption}

\subsection{Proof of \texorpdfstring{\cref{thm:AWSM_consistency}}{Theorem 5.1}}
\begin{proof}
    As shown in Theorem 5.7 in \cite{van2000asymptotic}, if we can prove a uniform in probability convergence for $\hat {\mathcal J}_{\text{AWSM}}(\theta)$ to $\mathcal J_{\text{AWSM}}(\theta)$, then using the fact that $\theta^*$ is a unique minimizer of $\mathcal J_{\text{WSM}}(\theta)$ in a compact set in $\mathbb R^d$, the consistency is proved. Therefore, we only prove the uniform in probability convergence.
    
    For any $\theta \in \Theta$, for each sampled sequence $(t_1^{(m)},\ldots, t_{N_m}^{(m)})$, we perceive the following value as a random variable,
    \begin{equation*}
    \begin{aligned}
            \xi_{m} = \sum_{n=1}^{N_m}&\left[\frac{1}{2}\psi_{\theta}(t_n^{(m)}|\mathcal F_{t_{n-1}^{(m)}})h_n(\mathcal T^{(m)})+\frac{\partial}{\partial t_n}\psi_{\theta}(t_n^{(m)}|\mathcal F_{t_{n-1}^{(m)}})h_n(\mathcal T^{(m)})\right. \\+&\left.\psi_{\theta}(t_n^{(m)}|\mathcal F_{t_{n-1}^{(m)}})\frac{\partial}{\partial t_n}h_n(\mathcal T^{(m)})\right].
    \end{aligned}
    \end{equation*}
    One can verify that it is a measurable map from the sample space to the real line, therefore indeed a random variable. And its expectation is $\mathcal{J}_{\text{AWSM}}(\theta)$, which is finite by assumption. Since different sequences are i.i.d. samples with finite expectation, using the weak law of large numbers, we have: 
    \begin{equation}\label{eq:pointwise in probability convergence}
        \hat{\mathcal{J}}_{\text{AWSM}}(\theta) = \frac{1}{M}\sum_{m=1}^M \xi_m \xrightarrow{p} \mathcal{J}_{\text{AWSM}}(\theta), \forall \theta \in \Theta. 
    \end{equation}
    Now we prove that this convergence is uniform in $\Theta$. Similar to \cite{chen2013inference}, we first prove that 

    \begin{equation}\label{eq:desired convergence used in Chen}
        \forall \varepsilon > 0, \exists \delta > 0, s.t. \forall ||\theta_1-\theta_2||<\delta, \mathbb P(|\hat{\mathcal{J}}_{\text{AWSM}}(\theta_1) - \hat{\mathcal{J}}_{\text{AWSM}}(\theta_2)|>\frac{1}{3}\varepsilon) \rightarrow 0, M \rightarrow \infty.
    \end{equation}


    First, by \cref{assu:hawkes_differentiable}, we know $\lambda_{\theta}(t_n|\mathcal F_{t_{n-1}}$ and $\frac{\partial}{\partial t_n}\lambda_{\theta}(t_n|\mathcal F_{t_{n-1}}$ are continuous w.r.t $\theta$.
    Therefore $\psi_{\theta}(t_n|\mathcal F_{t_{n-1}}) = \frac{\partial}{\partial t_n} \log \lambda_{\theta}(t_n|\mathcal F_{n-1})-\lambda_{\theta}(t_n|\mathcal F_{t_{n-1}})$ and $\frac{\partial}{\partial t_n} \psi_{\theta}(t_n|\mathcal F_{t_{n-1}}) = \frac{\partial^2}{\partial t_n^2} \log \lambda_{\theta}(t_n|\mathcal F_{t_{n-1}})-\frac{\partial}{\partial t_n}\lambda_{\theta}(t_n|\mathcal F_{t_{n-1}})$ are both continuous w.r.t. $\theta$, therefore the $\mathcal J_{\text{AWSM}}(\theta)$ is a continuous function of $\theta$. Since $\Theta$ is a compact set in $\mathbb R^d$, we know $\mathcal J_{\text{AWSM}}(\theta)$ is uniform continuous.

    Therefore, we can bound $|\mathcal J_{\text{AWSM}}(\theta_1) - \mathcal J_{\text{AWSM}}(\theta_2)|$ using $\|\theta_1-\theta_2\|$. Using the result in \cref{eq:pointwise in probability convergence}, we know that for any $\varepsilon > 0$, we can find a uniform $\delta$ so that $|\mathcal J_{\text{AWSM}}(\theta_1) - \mathcal J_{\text{AWSM}}(\theta_2)| < \frac{1}{6}\varepsilon, \forall ||\theta_1-\theta_2||<\delta$. So that,
    \begin{equation*}
    \begin{aligned}
    &\mathbb P(|\hat{\mathcal{J}}_{\text{AWSM}}(\theta_1) - \hat{\mathcal{J}}_{\text{AWSM}}(\theta_2)|>\frac{1}{3}\varepsilon)\\ &<  \mathbb P(|\hat{\mathcal{J}}_{\text{AWSM}}(\theta_1) - {\mathcal{J}}_{\text{AWSM}}(\theta_1)|+|\hat{\mathcal{J}}_{\text{AWSM}}(\theta_2) - {\mathcal{J}}_{\text{AWSM}}(\theta_2)|>\frac{1}{6}\varepsilon)\\
    &< \mathbb P(|\hat{\mathcal{J}}_{\text{AWSM}}(\theta_1) - {\mathcal{J}}_{\text{AWSM}}(\theta_1)|>\frac{1}{12}\varepsilon) \rightarrow 0, M\rightarrow \infty. 
    \end{aligned}
    \end{equation*}

Now we follow exactly the same steps as  \cite{chen2013inference} for the uniform in probability convergence. Since \cref{eq:desired convergence used in Chen} hold, for such a $\delta$ in that equation, since $\Theta$ is a compact set in $\mathbb R^d$, there exists a finite number of open balls with radius $\delta$ whose union covers $\Theta$. Let $\vartheta_1,\ldots,\vartheta_i,\ldots, \vartheta_L$ denote the centers of these balls. We denote $\vartheta_{i(\theta)}$ the center of a ball which contains $\theta$. Since we have 

\begin{equation*}
    \begin{aligned}
        &\mathbb P(\sup_{\theta}|\hat{\mathcal{J}}_{\text{AWSM}}(\theta) - {\mathcal{J}}_{\text{AWSM}}(\theta)|>\varepsilon) \leq \mathbb P(\sup_{\theta}|\hat{\mathcal{J}}_{\text{AWSM}}(\theta) - \hat{\mathcal{J}}_{\text{AWSM}}(\vartheta_{i(\theta)})|>\frac{\varepsilon}{3})\\
        &+\mathbb P(\sup_{\theta}|\hat{\mathcal{J}}_{\text{AWSM}}(\vartheta_{i(\theta)}) -  {\mathcal{J}}_{\text{AWSM}}(\vartheta_{i(\theta)})|>\frac{\varepsilon}{3}) + \mathbb P(\sup_{\theta}|{\mathcal{J}}_{\text{AWSM}}(\theta) - {\mathcal{J}}_{\text{AWSM}}(\vartheta_{i(\theta)})|>\frac{\varepsilon}{3}). 
    \end{aligned}
\end{equation*}

The third term on the right equals $0$ because of its definition and the uniform continuous of $\mathcal J_{\text{AWSM}}(\theta)$. The first term converges to $0, M\rightarrow \infty$ by \cref{eq:desired convergence used in Chen}. For the second term, we write 
\begin{equation*}
    \mathbb P(\sup_{\theta}|\hat{\mathcal{J}}_{\text{AWSM}}(\theta) -  {\mathcal{J}}_{\text{AWSM}}(\vartheta_{i(\theta)})|>\frac{\varepsilon}{3})< \sum_{i=1}^L \mathbb P(|\hat {\mathcal J}_{\text{AWSM}}(\theta_i)-\mathcal J_{\text{AWSM}}(\theta_i)| > \frac{\varepsilon}{3}) \rightarrow 0. 
\end{equation*}

Finally we obtain $\sup_{\theta \in \Theta}|\hat {\mathcal J}_{\text{AWSM}}(\theta) - \mathcal J_{\text{AWSM}}(\theta)| \xrightarrow{p} 0$, which completes the proof. 
    
\end{proof}

\subsection{Proof of \texorpdfstring{\cref{thm:error_bound}}{Theorem 5.4}}

First we restate the technical assumptions needed for the proof.

\begin{assumption1}
       Assume there exists $\alpha>1$ such that,
    \begin{equation*}
        \mathop{inf}_{\theta:||\theta-\theta^*||\geq \delta} \mathcal J_{\mathbf h}(\theta)-\mathcal J_{\mathbf{h}}(\theta^*)\geq C_{\mathbf{h}}\delta^{\alpha}
    \end{equation*}
    holds for any small $\delta$. Here, $C_{\mathbf{h}}$ is a positive constant that depends on the weight function $\mathbf h$ such that $C_{a\mathbf{h}}=aC_{\mathbf{h}}$ for any positive constant $a$.
\end{assumption1}

For this assumption, we assume that the optimal parameter $\theta^*$ is well-separated from other neighbouring parameters in terms of the population objective values. This is a standard assumption for Theorem 5.52 in \citep{van2000asymptotic}. This assumption will satisfy if $\nabla_{\theta}^2 \mathcal J_{\bm h}(\theta)$ is positive definite at $\theta^*$.

\begin{assumption2}
    For $\forall n\in \mathbb N^+$, there exists $\dot{A_n}(\mathcal T), \dot{B_n}(\mathcal T)$ such that,
    \begin{equation*}
        \begin{aligned}
            |A_n(\mathcal T, \theta_1) - A_n(\mathcal T, \theta_2)| \leq \dot{A_n}(\mathcal T)||\theta_1-\theta_2||,\ \ \ \ |B_n(\mathcal T, \theta_1) - B_n(\mathcal T, \theta_2)| \leq \dot{B_n}(\mathcal T)||\theta_1-\theta_2||. 
        \end{aligned}
    \end{equation*}
\end{assumption2}
For this assumption, we assume the objective function has certain level of continuity. This assumption will realize if $\mathop{sup}_{\theta \in \Theta}||\nabla_{\theta}A_n(\mathcal T, \theta)||$ exisits for any $n$ and the same condition also applies to $B_n(\mathcal T, \theta)$.  

We also define, 
\begin{equation*}
    J_{\mathbf h}(\mathcal T; \theta) = \sum_{n=1}^{N_T} \left[[{A_n(\mathcal T,\theta)}h_n(\mathcal{T})+ {B_n(\mathcal T,\theta)} \frac{\partial h_n(\mathcal T)}{\partial t_n} \right],
\end{equation*}
we see $\mathbb E[J_{\mathbf h}(\mathcal T; \theta)]=\mathcal J_{h}(\theta)$.

Now we begin the proof of \cref{thm:error_bound}.

\begin{proof}
    First define 
\begin{equation*}
    \dot J_{\mathbf h}(\mathcal T) = \sum_{n=1}^{N_T}\left[\dot{A_n}(\mathcal T)h_n(\mathcal T) + \dot{B_n}(\mathcal T)\frac{\partial h_n(\mathcal T)}{\partial t_n}\right].
\end{equation*}



Next we evaluate the bracketing number of such function class,
\begin{equation*}
    \mathcal F_{\delta} :=\{J_{\mathbf h}(\mathcal T; \theta) - J_{\mathbf h}(\mathcal T;\theta^*)|\theta \in \Theta, ||\theta - \theta^*||\leq \delta\}.
\end{equation*}
a

Denote $\mathcal N_{[]}(\varepsilon, \mathcal F, L^2(P))$ as the bracketing number of $\mathcal F$ with the radius $\varepsilon$ under the norm of $L^2(P)$. $P$ is the underlying probability measure induced by $\mathcal T$. Use Example 19.6 in \cite{van2000asymptotic} we have,
\begin{equation*}
    \mathcal N_{[]}(\varepsilon ||J||_{L^2(P)}, \mathcal F_{\delta}, L^2(P))\leq (1+\frac{4\delta}{\varepsilon})^r
\end{equation*}

Since 
\begin{equation}
    |J_{\mathbf h}(\mathcal T; \theta) - J_{\mathbf h}(\mathcal T;\theta^*)|\leq \dot J_{\mathbf h}(\mathcal T)||\theta-\theta^*||\leq \dot J_{\mathbf h}(\mathcal T)\delta,
\end{equation}
therefore $\dot J_{\mathbf h}(\mathcal T)\delta$ is the envelope of $\mathcal F_{\delta}$, we denote it as $F_{\delta}$. Therefore we obtain,

\begin{equation*}
    \mathcal N_{[]}(\varepsilon ||F_{\delta}||_{L^2(P)}, \mathcal F_{\delta}, L^2(P))= \mathcal N_{[]}(\varepsilon \delta ||\dot J_{\bm h}||_{L^2(P)}, \mathcal F_{\delta}, L^2(P))\leq (1+\frac{4}{\varepsilon})^r.
\end{equation*}

After obtaining this quantity, the next step is to upper bound the empirical process defined as,

\begin{equation*}
    \mathbb G_M \big(J_{\mathbf h}(;\theta)-J_{\mathbf h}(;\theta^*)\big):= \frac{1}{\sqrt M}\sum_{m=1}^M\left[J_{\mathbf h}(\mathcal T_m;\theta)-J_{\mathbf h}(\mathcal T_m;\theta^*) - \mathbb E_{p(\mathcal T)}[J_{\mathbf h}(\mathcal T;\theta)-J_{\mathbf h}(\mathcal T;\theta^*)] \right]
\end{equation*}

Using Corollary 19.35 in \cite{van2000asymptotic}, since we have $||\dot J_{\mathbf h}||_{L^2(P)}=\Gamma(\mathbf h, A, B) < \infty$, then,
\begin{equation}\label{eq_sup_norm_of_Fdelta}
\begin{aligned}
        \mathbb E\left[\mathop{sup}_{f\in \mathcal F_{\delta}}\mathbb G_n(f)\right] &\leq CJ_{[]}(||F||_{L^2(P)}, \mathcal F_{\delta}, L^2(P))\\
        &= C||F||_{L^2(P)}\int_{0}^{1}\sqrt{\log N_{[]}(\varepsilon ||F||_{L^2(P)}, \mathcal F_{\delta}, L^2(P))}d\varepsilon\\
        &\leq C\delta ||\dot m||_{L(P)}\sqrt r \int_{0}^1 \sqrt{\log (1+\frac{4}{\varepsilon})}d\varepsilon \\
        &\leq C^{\prime}\sqrt r \delta \Gamma(\bm h, A, B), 
\end{aligned}
\end{equation}
where $C$ and $C^{\prime}$ are universal constant and $J_{[]}(\cdot, \cdot, \cdot)$ is the entropy integral.

Finally, given that $\hat \theta$ converges to $\theta^*$ in probability, combined with \cref{assump_seperate} and \cref{eq_sup_norm_of_Fdelta}, using Theorem 5.52 in \cite{van2000asymptotic} we have, for $\delta < CK_{\alpha}\frac{\sqrt {r}}{2^{\alpha-1}}\frac{\Gamma(\mathbf{h}, A, B)}{C_\mathbf{h}}$, we have,
\begin{equation*}
    \text{Pr}\left[||\hat \theta - \theta^*|| \leq \left(CK_{\alpha}\frac{\Gamma(\mathbf{h}, A, B)}{\delta C_\mathbf{h}}\sqrt{\frac{r}{n}}^{1/(\alpha-1)}\right)\right]\geq 1-\delta,
\end{equation*}
where $C$ is a universal constant and $K_{\alpha}=\frac{2^{2\alpha}}{2^{\alpha-1}-1}$. 

\end{proof}

\subsection{Proof of \texorpdfstring{\cref{thm:optimal_denominator}}{Theorem 5.5}}
First we give a rigorous definition of $\mathcal H$,
\begin{equation}\label{eq:definition_of_H}
\begin{aligned}
\mathcal{H} := &\left\{ \mathbf{h}(\mathcal{T}) \mid \mathbf{h}: \mathbb{R}_+^{N_T} \rightarrow \mathbb{R}_+^{N_T}, \ h_n(\mathcal{T}) = h_n(t_n, \ldots, t_1), h_n(\mathcal T)|_{t_n=t_{n-1}}=h_n(\mathcal T)|_{t_n=T} = 0 \right. \\
&\left. |h_n(t^1_n, t_1, \ldots, t_{n-1}) - h_n(t^2_n, t_1, \ldots, t_{n-1})| \leq |t^1_n - t^2_n|,\forall n, t_n^1, t_n^2 \right\}.
\end{aligned}
\end{equation}

In other words, $\mathcal H$ contains functions whose component function $h_n$ is $1$-Lipschitz continuous w.r.t. its last dimension $t_n$. It's easy to verfity $\mathbf h^0\in \mathcal H$. 

\begin{proof}
First  we prove that,
\begin{equation}\label{eq_max_esm}
    \mathop{max}_{\mathbf h\in \mathcal H}\mathcal L_{\mathbf h}(\theta)= \mathcal L_{\mathbf h^0}(\theta)
\end{equation}

Notice that,
\begin{equation*}
    \mathcal L_{\mathbf h}(\theta)= \frac{1}{2}\mathbb E_{p(\mathcal{T})}\left[\sum_{n=1}^{N_T}(\psi(t_n|\mathcal F_{t_{n-1}}) -\psi_\theta(t_n|\mathcal F_{t_{n-1}}))^2h_n(\mathcal T)\right]
\end{equation*}

Notice that, in $\mathcal H$,  for any $z=t-{n-1}$ or $z=T$,
\begin{equation*}
    |h_n(\mathcal T)| = |h_n(t_n, \ldots, t_1)|=|h_n(t_n,\ldots, t_1)-h_n(z,\ldots, t_1)|\leq |t_n^1-z|=h_n^0(\mathcal T)
\end{equation*}

Therefore,
\begin{equation*}
    \mathcal L_{\mathbf h}(\theta) \leq  \frac{1}{2}\mathbb E_{p(\mathcal{T})}\left[\sum_{n=1}^{N_T}(\psi(t_n|\mathcal F_{t_{n-1}}) -\psi_\theta(t_n|\mathcal F_{t_{n-1}}))^2h^0_n(\mathcal T)\right]=\mathcal L_{\mathbf h^0}(\theta), \forall \mathbf h\in \mathcal H
\end{equation*}
and equation \ref{eq_max_esm} is proved.

Finally, we have,

\begin{equation*}
    \begin{aligned}
        &\text{inf}_{\theta:||\theta-\theta^*||\geq \delta} \mathcal J_{\mathbf h^0}(\theta)  - \mathcal J_{\mathbf h^0}(\theta^*)\\
        =&\text{inf}_{\theta:||\theta-\theta^*||\geq \delta} \mathcal L_{\mathbf h^0}(\theta)  - \mathcal L_{\mathbf h^0}(\theta^*)\\
        =&\text{inf}_{\theta:||\theta-\theta^*||\geq \delta} \text{sup}_{\mathbf h\in \mathcal H} [\mathcal L_{\mathbf h}(\theta)  - \mathcal L_{\mathbf h}(\theta^*)]\\
        =&  \text{inf}_{\theta:||\theta-\theta^*||\geq \delta} \text{sup}_{\mathbf h\in \mathcal H} [\mathcal J_{\mathbf h}(\theta)  - \mathcal J_{\mathbf h}(\theta^*)]\\
        \geq &\text{sup}_{h\in \mathcal H} \text{inf}_{\theta:||\theta-\theta^*||\geq \delta} [\mathcal J_{\mathbf h}(\theta)  - \mathcal J_{\mathbf h}(\theta^*)].
    \end{aligned}
\end{equation*}

The second equality is due to \cref{eq_max_esm} and $\mathcal L_{\mathbf h}(\theta^*)=0$. The inequality is due to max-min inequality.
\end{proof}

\subsection{Continued Discussion on Optimal Weight Function}\label{sec:heuristic_disc_on_optimality}
Given $\bm h^0$ maximizes $C_{\bm h}$, it is still unclear whether it is still preferable considering $\Gamma(\mathbf{h},A,B)$. This is indeed a tough question that we do not yet have a satisfying answer.
For specific parametric models, $\dot{A}_n(\mathcal T), \dot{B}_n(\mathcal T)$ can be computed analytically (see line 478-480) and then $\Gamma(\mathbf{h},A,B)$ can be computed via Monte Carlo. Then we can study how sensitive $\Gamma(\mathbf{h},A,B)$ is to $\bf h$. For general models, especially when $\psi_{\theta}$ is a deep neural network like THP or SAHP, $\Gamma(\mathbf{h},A,B)$ is intractable to compute. 

However, heuristically speaking, our choice of near-optimal weight function $\bf h^0$ should be a good choice even concerning $\Gamma(\mathbf{h},A,B)$. To make $\Gamma(\mathbf{h},A,B)$ small, a natural idea is to make $|h_n(\mathcal T)|$ and $|\frac{\partial}{\partial t_n}h_n(\mathcal T)|$ small. The weight function we chose and its derivative have relatively positive low powers with respect to $t_n$, therefore making $|h_n^0(\mathcal T)|$ and $|\frac{\partial}{\partial t_n}h_n^0(\mathcal T)|$ small. For  weight functions  $h_n^1(\mathcal T)=(T-t_n)(t_n-t_{n-1})$, the power w.r.t. $t_n$ is two. And for $h^2_n(\mathcal T)=\sqrt{(T-t_n)(t_n-t_{n-1})}$, its derivative is $\frac{\partial}{\partial t_n}h_n^2(\mathcal T)=\frac{T-t_n-(t_n-t_{n-1})}{\sqrt{(T-t_n)(t_n-t_{n-1})}}$ , the numerator is usually a bounded quantity and the denominator may be close to zero, making its derivative large. In conclusion, $\bf h^0$ is a better choice compared with $\bf h^1$ or $\bf h^2$ concerning $\Gamma(\mathbf{h},A,B)$.


\subsection{Discussion on Poisson Process}
\label{section:theory_poisson}

For Poisson process, results in \cref{sec:Theory} also holds, including the consistency and the convergence rate. For the choice of weight function, a reasonable choice is still the distance function presented below.

Consider a Poisson process, since the support of $\mathcal T$ is $V=\{\mathcal T\in N^{T}, 0\leq t_1\leq \ldots \leq t_{N_T}\leq T\}$.  Define $A \in \mathbb R^{(N_T+1)\times N_T}$ to be a coefficient matrix used below as,
\begin{equation*}
    A = \begin{bmatrix}
-1 & 0 & 0 & \ldots & 0 \\
1 & -1 & 0 & \ldots & 0 \\
0 & 1 & -1 & \ldots & 0 \\
\vdots & \vdots  & \vdots & \ddots & 0 \\
0 & 0 & 0 & 1 & -1\\
0 & 0 & 0 & 0 & 1
\end{bmatrix}.
\end{equation*}

Denote $\mathbf a_n \in \mathbb R^{N_t}$ as the $n$-th row vector of $A$. Let $b_n=0, n\in [N_T], b_{N_T+1}=-T$. Then $V$ is a convex Polytope which can be rewritten as,
\begin{equation*}
    \begin{aligned}
        V = \{\mathcal T \in \mathbb R^{N_T}|\langle\mathbf a_n, \mathcal T\rangle + b_n \leq 0, n=1,2,\ldots, N_{T}+1\}.
    \end{aligned}
\end{equation*}

Therefore, the distance between $\mathcal T$ and $\partial V$ is ,
\begin{equation}\label{eq:distance_of_poisson}
    dist(\mathcal T, \partial V)=\min_{\mathbf z}\{\|\mathcal T - \mathbf z\| \vert \max_{n\in [N_{T}+1]}[\langle \mathbf a_n, \mathbf z\rangle + b_n] = 0\}=\min_{n\in [N_{T}+1]}\frac{|\langle \mathbf a_n, \mathbf z\rangle + b_n|}{\|\mathbf a_n\|}.
\end{equation}

Let $h^0_n(\mathcal T)= dist(\mathcal T, \partial V), \mathbf h^0(\mathcal T)=(h^0_n(\mathcal T), h^0_n(\mathcal T),\ldots, h^0_n(\mathcal T))^T$. Again, $\mathbf h^0(\mathcal T)$ is only weak derivative. The envelope theorem in \citep{milgrom2002envelope} yields,
\begin{equation*}
    \nabla_{\mathcal T}\mathbf h^0(\mathcal T)=-\frac{\mathbf a_{n^*}}{\|\mathbf a_{n^*}\|}, 
\end{equation*}
wehre $n^*$ is the is the minimizer of the last minimization in \cref{eq:distance_of_poisson}. 

Similar arguments as in \cref{section:optimal_weight_hawkes} also applies to Poisson process, with the distance weight function defined above. Such a weight function also maximizes the denominator for the convergence rate of the Poisson process. However, in the experiment, we adtops another weight function for easier implementation defined as $\mathbf h^1(\mathcal T)$ with its $n$-th component
$h^1_n(\mathcal T) = (T-t_n)(t_n-t_{n-1})$.


    

\section{Additional Experimental Details}
\label{sec:additional experiment}

In this section, we present some experimental details. First we discuss our modification to the original AWSM for when $T$ for a dataset is not accessible. Then we provide addtional estimation results for the parametric model and hyperparameters for results in \cref{table: Real Data Experiment}.

\subsection{Weight for Equal Length Sequences}\label{sec:seq_trunc}
In this paper, we consider the setting of the temporal point process in a fixed time interval $[0, T]$ consistently. Technically speaking, this poses extra difficulty since the dimension(number of events) would also be random. Therefore, the realized point processes should have a random length in this setting, which is the case for our dataset. However, in real data collection, different sequences may be sampled from different lengths of time interval lengths, and our proposed weight may fail under such a case, resulting in inconsistent estimation. In such a case, we propose to truncate the sequences to be of same length, and introduce the weight function for fixed-length temporal point processes.

We focus on AWSM for Hawkes process. A fixed $N$-length Hawkes process could be considered as a $N$ dimensional random variable with density function as,

\begin{equation*}
    p(t_1,\ldots, t_N)=\prod_{n=1}^N \lambda^*(t_n)\int_{0}^{t_N} \lambda^*(t)dt.
\end{equation*}

Since the number of observations is fixed, we could simply treat it as an $N$ dimensional random variable and perform autoregressive weighted score matching. The valid weight function in this case should satisfy $h_n(\mathcal T)|_{t_n = t_{n-1}} = h_n(\mathcal T)|_{t_n = t_{n+1}} = 0$. Similar nonasymptotic bounds could be derived here and a near optimal weight function would be the Euclidean distance between $t_n$ and the boundary of $t_n$'s support, which would be $\{t_{n-1}, t_{n+1}\}$. Therefore, the near-optimal weight function would be $h^0_n(\mathcal T)=\min \{t_n-t_{n-1}, t_{n+1} - t_n\}$.

During experiments, when $T$ is unknown and cannot be approximated well, we perform a data truncation for each batch. We naively drop all timestamps beyond the length of the shortest sequence to make length consistent within a batch. Then we perform an AWSM for the fixed-length Hawkes process. We specify the results with this modification when we show our hyperparameters.

\subsection{Addtional Results and Hyperparameters}
\label{sec:hyperparameters}
\begin{table*}[t]
\centering
\caption{The MAE of 2-variate Hawkes processes trained by MLE, (A)SM, and (A)WSM on the synthetic dataset.}
\label{table:additional_experiments}
\begin{sc}
\scalebox{0.74}{
\begin{tabular}{c|ccc|cccc}
    \toprule
    \multirow{2}{*}{Estimator} & \multicolumn{3}{c|}{Exp-Hawkes }  & \multicolumn{4}{c}{Gaussian-Hawkes } \\
    \cmidrule{2-8}
        & $\alpha_{21}$ & $\alpha_{22}$ & $\mu_{2}$ &  $\alpha_{12}$ & $\alpha_{21}$ & $\alpha_{22}$ & $\mu_2$ \\
    
    \midrule
    (A)WSM & $\bm{0.052_{\pm 0.054}}$ & $\bm{0.022_{\pm 0.005}}$& $\bm{0.011_{\pm 0.012}}$ &${0.037_{\pm 0.043}}$ &$0.082_{\pm 0.080}$  & {$0.012_{\pm 0.010}$} &$0.060_{\pm 0.066}$ 
    \\
     \midrule
    (A)SM & $0.769_{\pm 0.001}$ & $0.769_{\pm 0.001}$& $0.680_{\pm 0.270}$ & $0.126_{\pm 0.108}$ &$0.971_{\pm 0.040}$  & $0.717_{\pm 2.85}$ & $2.507_{\pm 1.957}$ 
    \\
    \midrule
    MLE  & {$0.065_{\pm 0.032}$} & {$0.034_{\pm 0.015}$} & $\bm{0.014_{\pm 0.002}}$ & $\bm{0.025_{\pm 0.032}}$&$\bm{0.045_{\pm 0.041}}$ & $\bm{0.006_{\pm 1.06}}$ & \bm{$0.051{\pm 0.049}$}\\
    \bottomrule
\end{tabular}}
\end{sc}
\end{table*}

\begin{figure}[t]
    \begin{center}
    \adjustbox{valign=b}{
    \begin{minipage}{0.45\linewidth}
    \includegraphics[width=\columnwidth]{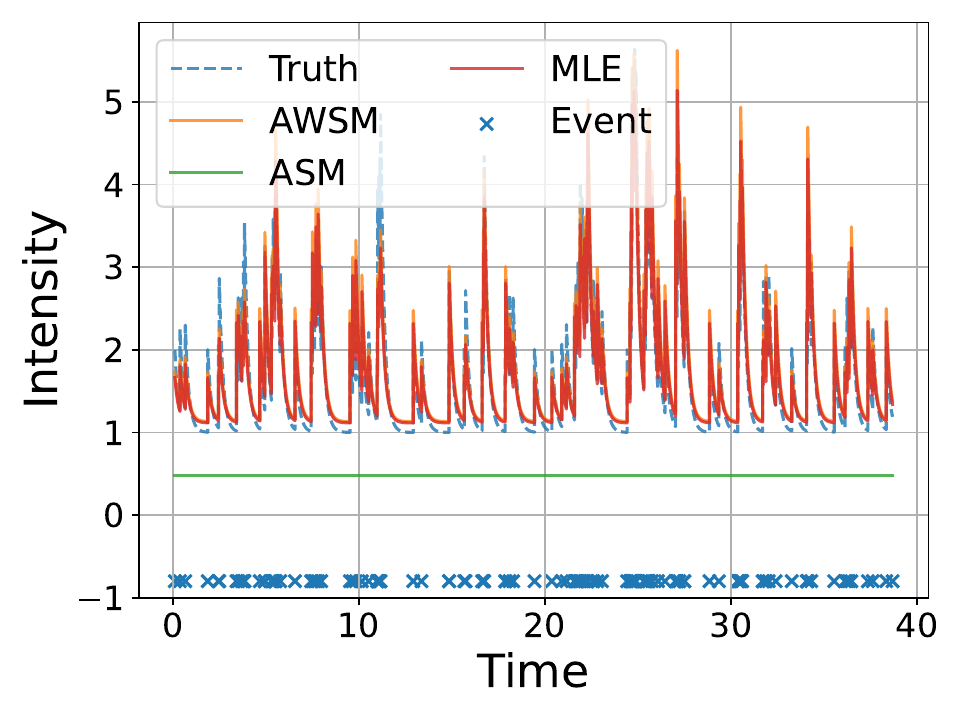}
    \subcaption{Exp-Hawkes Dim-2}
    \end{minipage}}
    \adjustbox{valign=b}{
    \begin{minipage}{0.45\linewidth}
    \includegraphics[width=\columnwidth]{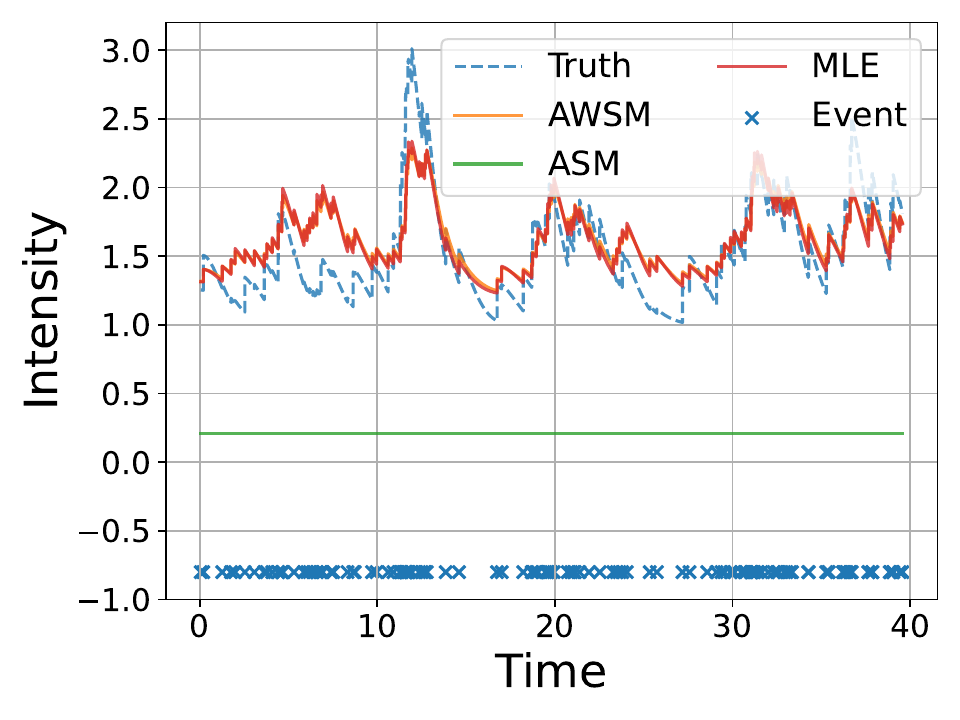}
    \subcaption{Gaussian-Hawkes Dim-2}
    \end{minipage}}
    \end{center}
    \caption{The learned intensity functions from MLE, (A)SM, and (A)WSM on (a) Exp-Hawkes and (b) Gaussian-Hawkes for the 2-nd dimension. 
    }
    \label{fig:second_dim_hawkes}
\end{figure}

\begin{table*}[t]
\centering
\caption{The hyparameters for experiments in \cref{table: Real Data Experiment}.}
\begin{sc}
   \scalebox{0.8}{
    \begin{tabular}{l|cc|cc|cc|cc|cc}

\toprule
    \multirow{2}{*}{Dataset} & \multicolumn{2}{c}{Epochs} &\multicolumn{2}{c}{$\alpha_{\text{AWSM}}$}&\multicolumn{2}{c}{Trunc}&\multicolumn{2}{c}{$\alpha_{\text{DSM}}$}&\multicolumn{2}{c}{$\sigma_{\text{DSM}}$}\\
    \cmidrule{2-11}
        &SAHP &THP &SAHP&THP&SAHP&THP&SAHP&THP&SAHP&THP \\
         
        \midrule \text { \text{Half-Sin} } & $100$ &$100$ & $20$ &$20$ & F &F & $20$&10 &$0.01$&$0.01$\\
        \text { \text{STACKOVERFLOW} } &$100$ &$500$ &$20$ & $20$&T& F& $10$& 10&$0.1$&$0.1$\\
        \text { \text{TAOBAO} } &$500$  &$300$ &$50$ &$20$ &T &T & $20$& 20&0.01&5e-3\\
        \text { \text{RETWEETS } } &$100$ &$100$ &$20$&$10$ & F&F & $2000$& 10&0.01&5e-4 \\
        \bottomrule
    \end{tabular}}
\end{sc}\label{table:hyper}
\end{table*}

For parametric models, \cref{table:additional_experiments} provide the estimation results for some parameters and \cref{fig:second_dim_hawkes} shows the learned intensity functions on the 2-nd dimension for the 2-variate Hawkes processes. 

For deep point process experiments, we run 4 datasets with 3 methods deployed on 2 models. We show the hyperparameters of those experiments in \cref{table:hyper}. \textbf{Epochs} column represents the number of epochs for the experiments of a model (THP or SAHP) on a dataset. We train same epochs for three training methods (MLE, AWSM or DSM) and validate every 10 epochs to report the best result. When training MLE, the hyperparameter is the number of integral nodes, which is always 10 for all experiments. When training AWSM, we have two hyperparameters, the value of balancing coefficient for CE loss, shown in column $\alpha_{\text{AWSM}}$ and whether data truncation is performed as shown in column \textbf{Trunc}. T represents performing data truncation and F represents no data truncation. For DSM, the hyperparameters are balancing coefficient denoted as $\alpha_{\text{DSM}}$ and variance of noise denoted as $\sigma_{\text{DSM}}$.

\end{document}